\documentclass[a4paper,fleqn]{cas-sc}

\def\tsc#1{\csdef{#1}{\textsc{\lowercase{#1}}\xspace}}
\tsc{WGM}
\tsc{QE}
\tsc{EP}
\tsc{PMS}
\tsc{BEC}
\tsc{DE}

\usepackage[utf8]{inputenc} 
\usepackage{multirow}
\usepackage{amsmath,amssymb,amsthm}	
\usepackage{dsfont}  
\usepackage{mathrsfs}   
\usepackage{mathtools}
\usepackage{bbm}
\usepackage{ifthen}
\usepackage{graphicx}
\usepackage{subcaption}
\usepackage{booktabs}
\graphicspath{ {./images/} }  
\usepackage{hyperref}	
\usepackage{thm-restate}
\usepackage{color}
\usepackage[dvipsnames]{xcolor}

\usepackage{placeins}

\usepackage{enumitem}
\usepackage[ruled,vlined]{algorithm2e}

\makeatletter
\AtBeginDocument{%
    \@ifundefined{sectionautorefname}
        {}
        {}%
    \@ifundefined{subsectionautorefname}
        {}
        {}%
    \@ifundefined{subsubsectionautorefname}
        {}
        {}%
    \@ifundefined{figureautorefname}
        {}
        {}%
    \@ifundefined{tableautorefname}
        {}
        {}%
    \@ifundefined{equationautorefname}
        {}
        {}%
    \@ifundefined{lemmaautorefname}
        {}
        {}%
    \@ifundefined{theoremautorefname}
        {}
        {}%
    \@ifundefined{propositionautorefname}
        {}
        {}%
    \@ifundefined{definitionautorefname}
        {}
        {}%
    \@ifundefined{remarkautorefname}
        {}
        {}%
    \@ifundefined{assumptionautorefname}
        {}
        {}%
    \@ifundefined{claimautorefname}
        {}
        {}%
    \@ifundefined{algorithmautorefname}
        {}
        {}%
    \@ifundefined{algocfautorefname}
        {}
        {}%
}
\makeatother

\usepackage{nicefrac}

\usepackage{todonotes}

\newtheorem{lemma}{Lemma}
\newtheorem{theorem}{Theorem}
\newtheorem{proposition}{Proposition}

\theoremstyle{remark}

\theoremstyle{remark}
\newtheorem{assumption}{Assumption}

\newcommand{\floor}[1]{\lfloor #1 \rfloor}

\newcommand{\card}[1]{\left \lvert#1 \right \rvert}

\newcommand{\Expectation}[1][]{ 
    \ifthenelse{ \equal{#1}{} }
    {\mathbb{E}}
    {\mathbb{E} \left [ #1 \right ] }
}
\newcommand{\Proba}[1][]{ 
    \ifthenelse{ \equal{#1}{} }
    {\mathbb{P} }
    {\mathbb{P} \left ( #1 \right )}
}
\newcommand{\indicator}[1][]{ 
    \ifthenelse{ \equal{#1}{} }
    {\mathds{1} }
    {\mathds{1} \left \{ #1 \right \}}
}

\newcommand\numberthis{\addtocounter{equation}{1}\tag{\theequation}}

\newcommand{\bvec}{\mathbf{b}}
\newcommand{\betavec}{\boldsymbol{\beta}}

\usepackage{ulem}
\usepackage{eurosym}

\usepackage[authoryear,longnamesfirst]{natbib}

\begin{document}

\let\WriteBookmarks\relax
\setcounter{topnumber}{3}
\setcounter{bottomnumber}{2}
\setcounter{totalnumber}{5}
\renewcommand{\topfraction}{0.9}
\renewcommand{\bottomfraction}{0.8}
\renewcommand{\textfraction}{0.07}
\renewcommand{\floatpagefraction}{0.8}
\shorttitle{Learning to Bid in FCR Markets}
\shortauthors{Potfer et~al.}
\author[1,2]{Marius Potfer}
\cormark[1]
\ead{marius.potfer@ensae.fr}
\credit{Conceptualization, Formal Analysis, Investigation, Methodology, Software, Writing -- Original draft}

\author[2]{Cheng Wan}
\ead{cheng.wan@edf.fr}
\credit{Funding acquisition, Resources, Supervision, Validation, Writing -- Review and editing}

\author[2]{Pierre Gruet}
\ead{pierre.gruet@edf.fr}
\credit{Funding acquisition, Resources, Supervision, Validation, Writing -- Review and editing}

\affiliation[1]{organization={ENSAE, CREST (Fairplay joint team)},
                addressline={5 Av. Le Chatelier},
                postcode={91120},
                city={Palaiseau},
                country={France}}

\affiliation[2]{organization={EDF Lab Paris-Saclay, FiME (Laboratoire de Finance des Marchés de l’Énergie)},
                addressline={7 Bd Gaspard Monge},
                postcode={91120},
                city={Palaiseau},
                country={France}}

\cortext[cor1]{Corresponding author}

\title[mode = title]{Learning to Bid in FCR Markets: A Best-of-Both-Worlds Approach}


\begin{keywords}
electricity reserve markets \sep online learning \sep auction \sep regret
\end{keywords}

\maketitle

\begin{abstract}
Bidding in the European Frequency Containment Reserve (FCR) market is challenging for flexibility providers because competing offers are hidden and bidders observe only partial feedback form the market, such as, clearing price and awarded quantity. For a participant active in a single country, we show that the multi-country FCR clearing problem can be recast as a repeated multi-unit uniform-price auction against an endogenous vector of opposing bids. This reformulation yields an online learning problem and allows us to adapt a Best-of-Both-Worlds combinatorial semi-bandit algorithm implementable from this standard market feedback. The resulting bidder achieves logarithmic pseudo-regret in stochastic environments and $\mathcal{O}(\sqrt{T})$ regret in adversarial ones. Synthetic experiments confirm the expected scaling, and backtests on historical European FCR data show competitive performance in practice: the method performs especially well on stable products, while EXP3-type baselines can be safer under stronger non-stationarity. Overall, the results show that learning-based bidding in FCR markets is theoretically grounded and practically useful when the learning rule matches product-level market stability.
\end{abstract}

\section{Introduction} \label{introduction}

A balance between power supply and demand is necessary to maintain the stability of a power grid.
Depending on the power grid one considers, this balance is ensured through different mechanisms. In most countries, consumers' and producers' schedules are coordinated ahead of time in order to provide this equilibrium. For most European countries, including France, this coordination takes place the day before delivery through a common market mechanism: the wholesale spot market. 

To ensure robustness of the supply/demand equilibrium, one needs to be able to compensate for sudden, unpredictable changes in the behavior of a consumer or a producer (for instance, in case of the outage of a power plant).
This is usually dealt with by leveraging the ability of producers and consumers to change their production or consumption in order to maintain the balance of the power grid. This capacity to modify upward or downward one's power production or consumption is called flexibility.
Transmission System Operators (TSO) make an agreement with flexibility owners to be able to use their flexibility for a fixed price beforehand. These are called \emph{reserve capacity}, and the price and allocation of this reserve capacity are determined by a market mechanism.

To meet operational requirements such as response time, total uptime, and transmission capacities, reserves are categorized into different types. One of these is the Frequency Containment Reserve (FCR), which is the focus of our market analysis. The FCR is activated immediately following a supply or demand imbalance and remains in use until a more sustainable reserve can take over, typically from 15 seconds to a few minutes after the incident.

This work investigates algorithms and methods that enable a market actor to decide the best price to offer its flexibility in the FCR mechanism. To maximize its wealth, each market actor has an incentive to derive relevant participation strategies and to run them consistently over time. This is a challenging problem, as it involves both known factors (such as the market rules and production costs) and unknown ones (notably, the prices offered by competing producers). Moreover, beyond the design of the strategy of participation in an auction, the actor can benefit from the daily repetition of the auction to try to learn the behaviour of other participants.

Our approach builds on the observation that, from the point of view of one participant who is located in a single country, the FCR mechanism can be reduced to a uniform-price auction with a fixed number of units to be procured. While in these auctions, as in most, the best prices to submit depend on other participants' strategies, when the same auction takes place repeatedly, these strategies can be learned through online learning algorithms. 
Because reserve markets are also repeated frequently, this reduction allows us to leverage existing results from the literature on learning in auctions, which we adapt to our specific setting.

\subsection{Literature review}

Online learning for auctions was first studied from the point of view of the auctioneer, notably by \citep{blum2004online}, trying to maximize revenue, or \cite{kanoria2014dynamic}, who focused on learning the reserve pricing. These online procedures were then first explored to be used by participants in these auctions by \cite{weed2016online}. While early work focused on learning single-item auctions, multi-unit auctions were also examined and first mentioned in \cite{feng2018learning}. The uniform price setting, which is of particular interest to our problem, was specifically explored in \cite{branzei2023learning} and then improved by \cite{potfer2024improved}. Further work on these auctions with particular learning objectives or comparing uniform auctions to other auction formats was also developed recently \citep{golrezaei2024bidding,potfer2025comparing}.

Sequential learning has been applied extensively to study energy markets. It has been studied for bidding in energy markets under the adversarial opposing bid hypothesis by \cite{wang2022earning}. A similar approach was presented in \cite{karaca2020no}, where they designed a variation of the EXP3 algorithm for a general model that resembles procurement auctions. Similarly, \cite{abate2024learning} studied how the use of no-regret algorithms by participants in forward electricity markets could influence social welfare. Machine learning-based approaches have also been explored more recently, such as in \cite{YAN2025111320} that leverages deep reinforcement learning or \cite{BEZOLD2025111269}, which focuses on more traditional machine learning techniques such as decision trees or SVMs. 

The online learning algorithms used in the previously described settings benefit from theoretical guarantees. However, these guarantees vary widely depending on the environment (adversarial or stochastic), and one has to choose EXP3 or UCB accordingly \citep{lattimore2020bandit}. In practice, one usually cannot tell beforehand if the environment is stochastic or not. Best Of Both Worlds (BOB) algorithms were introduced in \cite{bubeck2012best} and they adapt automatically to both environments. Recent advances in BOB algorithms allow a simple formulation as an optimization problem \cite{zimmert2021tsallis}. \cite{ito2021hybrid} proposes an algorithm with guarantees not only for adversarial and stochastic environments but also for settings smoothly interpolating the two.

\subsection{Our Contributions}

The main contributions of this work are threefold: (i) modeling of the FCR auction and, in particular, simplification arising from the point of view of a participant, (ii) algorithm design of a bidding algorithm and theoretical guarantees for the performance, (iii) numerical simulation and backtesting on real data. 

This work provides a precise description of the FCR market rules and of the details of the clearing mechanism available on the website~\cite{FCR-clearing-result}. We give a formal modeling of this mechanism with several countries interconnected by a given capacity and a description of the price fixing and allocation rules. Focusing on a utility-maximizing flexibility provider's bidding strategy, we show that the problem he faces can be reduced to bidding in a uniform price auction, under weak assumptions. In particular, this remains valid in practice when only accepted bids are revealed.  
This reduction is a key contribution of this work. Its benefits are twofold: first, it allows us to disregard irrelevant variables such as demand in foreign countries, and second, it allows us to leverage on the existing literature on bidding in uniform price auctions. 

Building on the aforementioned reduction, we adapt existing online learning algorithms for learning to bid in auctions to the problem faced by a flexibility provider. Specifically, we focus on BOB algorithms which provide optimal theoretical guarantees on the performance without prior knowledge of the behaviour of other participants. We particularly leverage the ones developed for semi-bandit feedback \cite{zimmert2019beating}: their structure closely matches that of the auction studied in this paper, and in our reformulation, the corresponding coordinate-level signals can be reconstructed from the market-level bandit observations (allocation and clearing price). We also provide an efficient method to solve the optimization problem at each step of the BOB algorithm, which allows the algorithm to run in a reasonable time. 

To illustrate our results, we provide both synthetic market simulations and simulations based on actual reserve market data. The synthetic simulations showcase the guarantees provided by our learning procedure. The simulations based on actual data show that the approach can be used in practice to learn bid prices in the FCR market, and highlight that algorithm choice should depend on product characteristics: BOB performs well on more stable products, while EXP3-type methods can be preferable when non-stationarity is stronger.

The remainder of the paper is organized as follows. \autoref{sec : fcr market} formalizes the FCR clearing mechanism and states the reduction to a uniform-price auction from the perspective of one participant. We then introduce, in \autoref{sec:online learning section} , the repeated-auction learning setting (the feedback models and regret benchmarks) as well as our algorithm and its guarantees. The numerical study, which evaluates the method on both synthetic and historical FCR data is presented in \autoref{sec:numerics}. We finally conclude and summarize our findings in \autoref{conclusion}; the appendix (\autoref{app}) gathers the full proofs and technical constructions.

\section{FCR Market Modelling and reduction to Auction}\label{sec : fcr market}

In this paper, we focus on the European Frequency Containment Reserve (FCR) cooperation, operated under the ENTSO-E framework and national TSOs' joint procurement rules.
The FCR is a symmetric reserve, meaning it can be activated in both upward and downward directions.  As a result, we consider symmetric flexibility, without distinguishing between upward and downward services.
Furthermore, since both consumers and producers can provide flexibility, we will refer to both types of actors as flexibility providers or participants. The FCR divides electricity into units of 1 MW, which we will call units in the following.

The general rules of this mechanism are drafted in the official report \citep{proposal-FCR-EU}, and daily clearing data is available in \cite{FCR-clearing-result}.
The model presented below omits some operational refinements but preserves the core dynamics that shape the price formation. In particular, it accounts for the multi-country nature of the mechanism and the restricted interconnection capacities between national grids. These two elements together create discontinuities in the price-setting process.

Let $\mathcal{S}$ be the set of countries involved in the mechanism. For each $s \in \mathcal{S}$, denote $\mathcal{J}_s$ the set of flexibility providers within that country, and $\mathcal{J}:= \cup_{s \in \mathcal{S}} \mathcal{J}_s$ the set of all flexibility providers.

\begin{assumption}
We assume in this paper that each flexibility provider only participates in one country. Mathematically, it means that the sets $(\mathcal{J}_s)_{s \in \mathcal{S}}$ are disjoint.
\end{assumption}

The FCR market proceeds as follows:
\begin{enumerate}
\item For each country $s \in \mathcal{S}$, the demand for reserve $D_s$ and the maximum available transmission capacity $T_s$ (the maximum amount of electricity that can be transferred in or out of country $s$) dedicated to reserve are made public.
\item Each flexibility provider $j \in \mathcal{J}$ submits a set of bids denoted $\bvec_j$, reflecting how many units they are willing to provide and at which prices.
\item The market clears, determining one price $p_s$ for each country $s \in \mathcal{S}$ (which applies to all providers from that country), and an allocation $x_j$ for each flexibility provider $j \in \mathcal{J}$.
\item Each flexibility provider $j \in \mathcal{J}$ located in country $s$, is paid the price $p_s$  for each unit of reserve it is allocated, i.e., it receives $x_j p_s$ euros.
\end{enumerate}

We define the maximum quantity of reserve that can be procured in country $s$ by $D_{max,s}:= D_s + T_s$.

The following paragraph provides details on the \emph{bidding} procedure and the following \autoref{sec:clearing} expands on the market clearing mechanism.

\paragraph{Bidding} \label{sec : bidding}
In practice, flexibility providers submit bids in the form of a set of prices and corresponding quantities, representing how many units they are willing to sell above each price. For ease of notation, we assume that the bid prices (which in practice have upper and lower bounds) are rescaled between $0$ and $1$.
We also assume that flexibility providers submit one price for each unit of reserve they are willing to sell.

\begin{assumption}
For every auction and every country $s \in \mathcal{S}$, the clearing price satisfies $p_s < 1$.
\end{assumption}

Under this assumption, we can complete each offer up to $D_{\max,s}$ by inserting the maximal price $1$ without changing outcomes: these completed bids are never accepted and therefore do not affect prices or allocations. Empirically, this assumption is consistent with the historical FCR data considered in this paper.\newline
Since bids are made in euros, the smallest price increment is one cent. The rescaled bids therefore belong to a corresponding uniform discretization of $[0,1]$, denoted by $\mathcal{B}$. Hence, to participate in the market, each flexibility provider emits a bid vector $\bvec_j \in \mathcal{B}^{D_{max,s}}$, whose coordinates are assumed to be ordered in non-decreasing order.

\subsection{Clearing FCR market} \label{sec:clearing}

The clearing of the market is described in the following section. It aims at minimizing the total cost of procuring the required reserve across all countries, while respecting national demand constraints, transfer limits, and the bids of all flexibility providers. It is formulated as a constrained minimization problem which determines both the allocation $\left( x_j \right)_{j \in \mathcal{J}}$ (i.e., how many units of reserve each provider sells) and the per-unit price paid in each country $\left( p_s \right)_{s \in \mathcal{S}}$:
\begin{equation}
\left( (x_j)_{j \in \mathcal{J}}, (p_s)_{s \in \mathcal{S}} \right)
= \underset{ \substack{ \mathbf{p} \in \mathcal{P}(\bvec) \\ (x_j)_{j \in \mathcal{J}} \in \mathcal{A}(\bvec, \mathbf{p}) }}{\arg\min}
\sum_{s \in \mathcal{S}} \sum_{j \in \mathcal{J}_s} x_j \, p_s ,
\label{eq:fcr_minimization}
\end{equation}
where $\mathcal{P}(\bvec)$ denotes the set of admissible price vectors given the submitted bids $\bvec$, and $\mathcal{A}(\bvec, \mathbf{p})$ the corresponding set of admissible allocations. We formally define these constraint sets below. 

\paragraph{Bids and Allocations.}
Each provider $j \in \mathcal{J}_s$ in country $s$ submits a set of bids $\bvec_j = (b_{j,l})_{l \in [D_{max,s}]} \in \mathcal{B}^{D_{max,s}}$. For a given price $p_s$, the maximum allocation a provider can receive is the number of units it offered below that price:
\[
\bar{x}_j(\bvec_j, p_s) := \sum_l \indicator\{ b_{j,l} \le p_s \}.
\]
Similarly, we can define the minimum allocation a flexibility provider can receive as the number of bids strictly below the price \[
\underline{x}_j(\bvec_j, p_s) := \sum_l \indicator\{ b_{j,l} < p_s \}.
\]
We denote by $x_j(\bvec_j,p_s)$ the actual allocation chosen by the clearing mechanism; it is an integer satisfying $\underline{x}_j(\bvec_j, p_s) \leq x_j(\bvec_j,p_s) \le \bar{x}_j(\bvec_j, p_s)$.

\paragraph{Transfers.}
For each pair of countries $(s, s') \in \mathcal{S}^2$, let $t_{s,s'} \in \mathbb{R}^+$ represent the transfer from $s$ to $s'$, and let $\mathcal{T}$ denote the set of \emph{valid transfers}, i.e.,
\begin{enumerate}[label=\roman*]
\item $\forall s \in \mathcal{S}, \sum_{s' \ne s} t_{s,s'} \le T_s$ (exports bounded by capacity),
\item $\forall s \in \mathcal{S}, \sum_{s' \ne s} t_{s',s} \le T_s$ (imports bounded by capacity),
\item $\forall s,s' \in \mathcal{S}^2, \ t_{s,s'} \cdot t_{s',s} = 0$ (no simultaneous import and export with a specific country).
\end{enumerate}
We say that the transmission capacity of country $s$ is \emph{saturated} if either its total imports or exports reach $T_s$.

\paragraph{Admissible Prices and Allocations.}
A vector of prices $\mathbf{p} = (p_s)_{s \in \mathcal{S}}$ is \emph{admissible} given the bids $\bvec$ if there exist valid transfers $\mathbf{t} = (t_{s,s'})_{s,s' \in \mathcal{S}^2} \in \mathcal{T}$ such that:
\begin{align}
& \forall s \in \mathcal{S}, \
\sum_{j \in \mathcal{J}_s} \bar{x}_j(\bvec_j, p_s)
\ge D_s + \sum_{s' \in \mathcal{S}} (t_{s,s'} - t_{s',s}),
\label{eq:clearing_demand_transfer} \\
& \exists p \in \mathcal{B}, \forall s \in \mathcal{S} \begin{cases}
\text{if} \left| \sum_{s'} (t_{s,s'} - t_{s',s}) \right| < T_{s} & \text{then } p_s =p \\
\text{if}  \sum_{s'} (t_{s,s'} - t_{s',s}) = T_s & \text{then } p_s \leq p \\
\text{if}  \sum_{s'} (t_{s,s'} - t_{s',s}) = - T_s & \text{then } p_s \geq p 
\end{cases}
\label{eq:clearing_prices_international}
\end{align}

Condition \eqref{eq:clearing_demand_transfer} on admissible prices ensures each country can meet its (demand + net exports). Condition \eqref{eq:clearing_prices_international} ensures that prices behave as in the true FCR market. It enforces equal prices across countries connected by non-saturated transfers and, respectively, higher/lower prices when import/export capacities are saturated.

Given admissible prices $\mathbf{p}$, an admissible allocation $\left( x_j \right)_{j \in \mathcal{J}}$ satisfies:
\begin{align}
& \forall s \in \mathcal{S}, \forall j \in \mathcal{J}_s, \underline{x}_j(\bvec_j, p_s) \leq  x_j \le \bar{x}_j(\bvec_j, p_s), \label{eq : encadrement allox} \\
& \forall s \in \mathcal{S}, \ \sum_{j \in \mathcal{J}_s} x_j
= D_s + \sum_{s'} (t_{s,s'} - t_{s',s}).
\label{eq:clearing_allocation_balance}
\end{align}

\subsection{Utility maximizing flexibility provider } \label{sec : simplification when focusing on bidding}

We focus on the perspective of a flexibility provider participating in the FCR market from a single country, that we will also call the \textit{principal}. It faces the problem of selecting bid values to maximize its expected utility. We fix this provider as $j \in \mathcal{J}_s$, in country $s$, for the rest of the paper. We adopt the following quasi-linear model for its utility: let $\left \{ c_1,\hdots,c_{D_{max,s}} \right \} \in [0,1]^{D_{max,s}}$ be the marginal costs of producing each unit of flexibility sold. We assume these marginal costs are non-decreasing. The utility of the principal after the auction writes as follows : \begin{align} \label{eq : utility flex producer}
u_j(\bvec) := \sum_{i=1}^{x_j(\bvec)} \left ( p_s(\bvec) - c_i \right )
\end{align}

From the point of view of the principal, the only relevant outcomes of the market clearing are its own allocation $x_j(\bvec)$ and the local clearing price $p_s(\bvec)$ (this is clear from the utility formula \eqref{eq : utility flex producer}). This results in the following: everything happens as if it were participating in a uniform price auction (introduced below and fully described in \autoref{app : lemma equivalence}), which is simpler than the clearing mechanism described in \autoref{sec:clearing}. The following \autoref{lemma : equivalence market-auction} provides a formal statement of this reduction.

\subsection{Reduction to a Uniform Auction}

The equivalence described below provides a direct structural justification for applying existing online learning techniques, developed for uniform price auctions, to the real FCR market. Before stating our result, let us briefly introduce the mechanism that is the focus of this equivalence. 

\paragraph{Uniform Price auction} 
A uniform-price auction is a mechanism that selects the \(D_{max,s}\) lowest bids from a set of submitted bids. It accepts the corresponding units and pays every accepted unit the same clearing price, equal to the last (highest) accepted bid. We denote by \((p^U(.),x^U(.))\) the resulting clearing price and accepted quantity (called allocation). Full details are provided in \autoref{sec:online learning}.

We denote $\bvec_{-j,s}$ and $\bvec_{-s}$ respectively the bids of the other producers in country $s$ and the bids of producers in all other countries $s' \neq s$. Then the following holds: 

\begin{restatable}{theorem}{mymaintheorem} \label{lemma : equivalence market-auction}
There exists a constructively computable set of bids $\betavec (  \bvec_{-j,s},\bvec_{-s} ) $, such that for any bids $\bvec_j$ of the principal: \begin{itemize}
\item The clearing price of FCR mechanism \( p_s(\bvec_j,\bvec_{-j})\) equals the price $p^\star(\bvec_j,\betavec)$ of the uniform price auction procuring $D_{max,s}$ units. 
\item The allocation of the FCR market $x_j(\bvec_j,\bvec_{-j})$ and of the uniform auction $x(\bvec_j,\betavec)$ are equal.
\end{itemize} 
Therefore, from the perspective of a utility-maximizing flexibility provider, as described in \autoref{sec : simplification when focusing on bidding}, both mechanisms are equivalent.
\end{restatable}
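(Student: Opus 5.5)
The construction of $\betavec$ rests on one object: the \emph{residual supply curve} that the rest of the market presents to country $s$. I will encode this curve as $D_{max,s}$ opposing bids, so that sorting the principal's bids together with $\betavec$ and taking the $D_{max,s}$ lowest reproduces the FCR outcome in country $s$.

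First, fix $\bvec_{-j,s}$ and $\bvec_{-s}$, and define for every candidate local price $p\in\mathcal{B}$ the quantity $R(p)$. This is the largest amount of reserve that the other participants can supply to country $s$ at price $p$. It consists of the local supply $\sum_{k\in\mathcal{J}_s\setminus\{j\}}\bar x_k(\bvec_k,p)$ plus the net import that the foreign countries can feasibly deliver. That import is obtained by solving the foreign clearing problem of \eqref{eq:fcr_minimization} with country $s$ either at price $p$ or importing at saturation $T_s$. Concretely, $R(p)=\sum_{k\ne j}\bar x_k(p)+\min\{T_s,\ I_{-s}(p)\}$, where $I_{-s}(p)$ is the foreign excess supply at common price $p$ (total foreign supply at $p$ minus $\sum_{s'\neq s}D_{s'}$, respecting the other capacities $T_{s'}$). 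Since the price can only exceed foreign prices when $s$ imports at capacity, the admissibility conditions \eqref{eq:clearing_prices_international} should give exactly this cap. I will show that $R$ is non-decreasing in $p$, which follows from the monotonicity of each $\bar x_k$ and of the foreign supply.

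Second, I will define $\betavec$ as the generalized inverse of $R$ on the grid: for $l=1,\dots,D_{max,s}$, set $\beta_l:=\min\{p\in\mathcal{B}: R(p)\ge l\}$, or $1$ if no such $p$ exists, which the assumption $p_s<1$ renders harmless. This makes the count of $\beta_l\le p$ equal $\min(R(p),D_{max,s})$. The uniform auction with bids $(\bvec_j,\betavec)$ and quantity $D_{max,s}$ then has clearing price equal to the smallest $p$ with $\bar x_j(\bvec_j,p)+R(p)\ge D_s$. Here the role of demand is played by completing with the $T_s$ exportable units: I will need to check carefully how $D_{max,s}=D_s+T_s$ versus $D_s$ enters. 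The likely fix is to let the first $T_s$ units of $\betavec$ represent export opportunities, i.e.\ foreign demand that country $s$ can serve at the foreign price, which shifts $R$ by the export possibility. This is also where the minimization \eqref{eq:fcr_minimization} shows that the FCR local price is the lowest admissible $p_s$ clearing $s$'s (demand $+$ net export).

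Third, I will prove the two equalities. For the prices, I will show that any $p$ admissible for $s$ in the FCR problem satisfies the auction's feasibility inequality and conversely, using \eqref{eq:clearing_demand_transfer}, \eqref{eq:clearing_prices_international} and the cost-minimizing choice of transfers. For the allocations, I will show that both mechanisms give the principal all bids strictly below the price, and split ties at the price according to the same rule, via \eqref{eq:clearing_allocation_balance}. The tie-breaking must be made consistent by an ordering convention in the uniform auction.

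The main obstacle is the first step: proving rigorously that the multi-country clearing, seen from $s$, depends on the foreign bids only through the monotone function $R(p)$ with saturation at $\pm T_s$. This requires arguing that the minimizer of \eqref{eq:fcr_minimization} decouples into an outer choice of $s$'s net position and an inner foreign problem. I would first try an exchange argument: any admissible solution with a higher $p_s$ than the smallest feasible one can be modified to lower cost by shifting transfers. This argument would establish that $p_s$ is the minimal $p$ satisfying the auction condition.
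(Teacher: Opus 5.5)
There is a genuine gap. The step you call the main obstacle is, once your construction is in place, equivalent to the theorem itself. Suppose $\betavec$ is the generalized inverse of a non-decreasing $R$ that does not depend on $\bvec_j$. Then the uniform auction against $\betavec$ clears, by definition, at the smallest $p$ with $\bar x_j(\bvec_j,p)+\min(R(p),D_{\max,s})\ge D_{\max,s}$, with the corresponding tie split. So the claim that the FCR clearing in $s$ depends on $\bvec_{-j}$ only through such an $R$, and selects exactly that $p$, is the statement to be proved, not a step toward it.

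The exchange argument you sketch is not routine. The objective in \eqref{eq:fcr_minimization} is the total payment $\sum_{s'}p_{s'}\bigl(D_{s'}+\text{net export of } s'\bigr)$. Lowering $p_s$ generally forces a different net position for $s$, and that changes the quantities and, through \eqref{eq:clearing_prices_international}, the prices of the other countries. You would have to show two things:
\begin{itemize}
\item the minimizer never accepts a higher $p_s$ in exchange for cheaper procurement abroad;
\item among cost-equivalent solutions, it picks the net position whose balance \eqref{eq:clearing_allocation_balance} gives $j$ exactly the auction's tie split when $j$ has bids at $p_s$.
\end{itemize}
Neither is carried out.

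The calibration problem you flag yourself is also real. Your $R$ omits forced exports (net import must be clipped at $-T_s$), and it is matched against $D_s$ instead of $D_{\max,s}$. A consistent choice is
\[
\tilde R(p)=\sum_{k\in\mathcal J_s\setminus\{j\}}\bar x_k(p)+T_s+\mathrm{clip}\bigl(I_{-s}(p),-T_s,T_s\bigr).
\]
With this, $\bar x_j(p)+\tilde R(p)\ge D_{\max,s}$ says that total local supply at $p$ covers $D_s$ minus the net import available at $p$.

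The paper never describes foreign excess supply. What your plan lacks is a structural property of the clearing that is local in $j$'s own bids, namely Lemma~\ref{lem:monotonicity_clearing_bids}: lowering already-accepted bids of $j$, or raising already-rejected ones, leaves $x_j$ unchanged. With this lemma the paper proceeds in three steps:
\begin{itemize}
\item $\betavec$ is defined implicitly through thresholds on test profiles, $\beta_{K-m+1}:=\sup\{\alpha: x_j(\psi^m(\alpha))\ge m\}$ with $\psi^m(\alpha)=(0,\dots,0,\alpha,1,\dots,1)$;
\item repeated use of the lemma gives $\indicator\{x_j(\bvec_j)\ge m\}=\indicator\{b_{j,m}\le\beta_{K-m+1}\}$;
\item the price identity $p_s=\max\{b_{j,k},\beta_{K-k}\}$ follows from the bracket $b_{j,k}\le p_s<b_{j,k+1}$ by perturbing $b_{j,k+1}$.
\end{itemize}
If you keep your explicit route, you still need a proved result of this kind, i.e.\ a monotonicity or lattice property of the minimizer of \eqref{eq:fcr_minimization}. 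The explicit $\tilde R$ would then be a real bonus, since it identifies $\betavec$ in terms of market primitives, but it cannot replace that step.
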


The full proof is available in Appendix \autoref{app : lemma equivalence}. We provide below a proof sketch to provide insight into both how $\betavec$ can be computed, and where the equivalence between the two mechanisms arises.

\begin{proof}[Proof sketch]
Let $K := D_{\max,s}$. The first key structural input is the monotonicity lemma from Appendix~\autoref{lem:monotonicity_clearing_bids}: if bidder $j$ lowers only bids that were already accepted, or raises only bids that were already rejected, then $j$'s allocation does not change. \newline
This implies that each unit index $m\in[K]$ has a threshold: keeping all bids from other flexibility providers fixed, the principal sells unit $m$ if and only if $b_{j,m}$ is below that threshold. By denoting each $\betavec=(\beta_1,\dots,\beta_K)$ the non-decreasing vector built from these thresholds, we can write the allocation as follows :
\[
x_j(\bvec_j,\bvec_{-j}) = \sum_{m=1}^{K} \indicator\{b_{j,m}\le \beta_{K-m+1}\}.
\]
This is exactly the acceptance rule of a $K$-unit uniform-price auction against opposing bids $\betavec$.

Having established the equivalence of the allocation in the FCR with that of a $K$-unit uniform-price auction against opposing bids $\betavec$, it only remains to show price equivalence (as per  \eqref{eq : utility flex producer}) for both utilities to be equal. 

We prove that the FCR price must be equal to the price in the equivalent $K$-unit uniform-price auction against opposing bids $\betavec$ by leveraging the previously shown allocation equality, \eqref{eq : encadrement allox}, and the definitions of $\bar{x}$ and $\underline{x}$.
The Appendix \autoref{app : lemma equivalence} provides the complete formal arguments that make this sketch rigorous.
\end{proof}

\section{Online Learning}\label{sec:online learning section}

We focus on the problem faced by the principal: the choice of the best bid $\bvec$ in order to maximize its utility. We are interested in leveraging the repeated aspect of the FCR auction to accumulate knowledge about \textit{the behaviour} of opposing bids. We formally describe below the uniform auction, the repeated setting as well as the information available to the principal, allowing them to \emph{accumulate knowledge}.

\subsection{Setting and Notations}\label{sec:online learning in auction}

\autoref{lemma : equivalence market-auction} demonstrates that, when considering the problem faced by the principal, the FCR auction can be viewed as an instance of an uniform auction, where $D_{max,s}$ units are to be bought and in which other participants' bids are $\boldsymbol{\beta}$. Therefore, from this section onward, we work in the reduced uniform-auction representation, which is sufficient because of \autoref{lemma : equivalence market-auction}. Since there is no ambiguity remaining, we write $p(\bvec,\betavec)$, $x(\bvec,\betavec)$, $u(\bvec,\betavec)$ for outcomes instead of $p^{\mathrm U}(\bvec,\betavec)$, $x^{\mathrm U}(\bvec,\betavec)$, $u^{\mathrm U}(\bvec,\betavec)$. We detail below how this auction proceeds:

\paragraph{Uniform Price auction}  \label{sec:online learning} We describe below in detail the uniform price auction mentioned above, where $D_{max,s}$ units are to be bought, from the point of view of the principal, whose cost of producing the $\text{k}^\text{th}$-unit of electricity is denoted by $c_{k}\in [0,1]$. In this auction, the principal faces opposing bids $\betavec$ (which, by \autoref{lemma : equivalence market-auction} can be computed from actual opposing bids in FCR markets). The auction proceeds as follows:

\begin{enumerate}
\item The principal submits its bids $\bvec :=(b_{k})_{k\in[K]} \in B$, where $K=D_{max,s}$ and $B= \left \{(b_l)_{l \in [K]} \in \mathcal{B}^{D_{max,s}} \right .$, such that $ \left. 0 \leq b_{1}\leq b_{2} \leq \hdots \leq b_{K} \leq 1 \right \}$.
\item The opposing bids $\betavec = (\beta_k)_{k \in [K]} \in B$ (in non-decreasing order)\footnote{We take $\betavec \in B$, this is without loss of generality : if $\boldsymbol{\beta}$ has more than $D_{max,s}$ elements then, when only keeping the $D_{max,s}$ the auction results remain the same regardless of $\bvec_j$  } are submitted to the auction.
\item The per-unit price is set as the $D_{max,s}^\text{th}$ lowest bid from $(\bvec_j,\betavec)$, denoted by $p \left(\bvec_j, \betavec \right)$.
\item The principal sells to the auctioneer every unit of flexibility they proposed below the price $p \left( \bvec_j,\betavec \right)$ and receives this price in exchange. Their allocation $x\in[K]$ of production is therefore as follows\footnote{This allocation breaks ties in favor of the flexibility provider for simplicity; our results still follow through if tie-breaking is random or always against the principal.}:
\begin{align*}
x\left (\bvec,\betavec \right ) & := 
\card{ \left \{k \in [K] \text{ s.t. } b_{k} \leq p \left ( \bvec,\betavec \right ) \right \} } .\numberthis \label{def : allocation elec market}
\end{align*}
\end{enumerate}

This setup gives rise to the quasi-linear utility $u(\bvec,\betavec) = \sum_{l=1}^{x(\bvec,\betavec)} \left [ p(\bvec,\betavec) - c_l\right]$.

Let $T$ be the time horizon of the repeated auction. Sequentially, at each time $t \in [T]$, the auction proceeds as described in \autoref{sec:online learning}, or equivalently as described above. We extend the notations introduced above to describe the auction with a superscript $t$ (i.e., the principal bids $\bvec^t$, opposing bids $\betavec^t$, etc.). The following convention allows us to completely describe the online learning problem faced by the principal.

\paragraph{Feedback} For learning, specifying the information available after each round, usually referred to as \emph{feedback}, is pivotal. We describe below the two types of feedback we consider.  \begin{itemize}
\item Full information feedback: All the information about the auction is revealed to the principal, formally, the principal observes $\betavec^t$.
\item Bandit Feedback: The principal only observes its allocation $x(\bvec^t,\betavec^t)$ and the price $p(\bvec^t,\betavec^t)$.
\end{itemize}

In the original auction representation, it is common to only receive Bandit Feedback. As such, the algorithm we propose below (\autoref{algo : bob_bidder}) works with this minimal feedback. Naturally, if the principal receives richer feedback, \autoref{algo : bob_bidder} remains valid and can be implemented by reconstructing the weaker bandit feedback.  

\paragraph{Opposing bids} We consider two possibilities regarding the process generating the opposing bids $\left ( \betavec^t \right )_{t\in [T]}$. Either the opposing bids are adversarial, or they are stochastic. We describe below these two settings as well as the two notions of regret, a common benchmark in online learning, which correspond to the settings. \begin{itemize}
\item The opposing bids are stochastic. This setting is defined by $\mathcal{D}$, a distribution from which, for each timestep $t \in [T]$, the opposing bids $\betavec^t$ are sampled.
\item The opposing bids are adversarial. In this case, the opposing bids $(\betavec^t)_{t \in [T]}$ can be any fixed sequence.
\end{itemize}

Notice that the stochastic setting is a special case of the adversarial one. The purpose of having these two separate settings is to be able to provide stronger results in the stochastic setting.

\paragraph{Regret} It is common in online learning literature to quantify the quality of a learning algorithm by its \textit{regret}: the difference between the cumulative utility received and the one generated by the best bid in hindsight. We define it below: 
\begin{equation}
R_T = \sup_{\bvec \in B}{\mathbb{E}}  \left [  \sum_{t=1}^T u(\bvec,\betavec^t) -  \sum_{t=1}^T u(\bvec^t,\betavec^t) \right ]
\end{equation}

In the stochastic setting, the pseudo-regret is a similar benchmark which takes into account the random nature of the rewards: \begin{equation}
\tilde{R}_T = T \sup_{\bvec \in B}  \underset{ \betavec \sim \mathcal{D}}{\mathbb{E}}  \left [  u(\bvec, \betavec) \right ] - {\mathbb{E}}  \left [  \sum_{t=1}^T  u(\bvec^t, \betavec^t) \right ]
\end{equation}

\subsection{Proposed algorithm}

We describe in this section the algorithm we propose to solve the regret minimization problem introduced above, by first explaining how the bidding problem faced by the principal maps to a combinatorial semi-bandit model, then stating the chosen algorithm and its regret guarantees, and finally discussing how to compute its key minimization step efficiently. Our mapping to combinatorial semi-bandits builds upon insights from both \cite{branzei2023learning} and \cite{potfer2024improved}, which study the same type of multi-unit auction \autoref{sec:online learning in auction} in a repeated setting. We combine this structural insight with the online learning algorithm from \cite{zimmert2019beating}, which adapts to both stochastic and adversarial instances.

Using the results from Lemma 1 and Lemma 3 in \cite{potfer2024improved} allows us to transform our bidding space $\mathcal{B}$ into $\mathcal{H} \subseteq \{0,1\}^N$ (with a one-to-one correspondence between $\mathcal{B}$ and $\mathcal{H}$). We can also rewrite the utility as follows:\begin{equation}
u(\bvec,\betavec) = \mathbf{h} (\bvec) \mathbf{W}(\betavec)
\end{equation}

We provide a quick summary of how $\mathbf{h}(\cdot)$ and $\mathbf{W}(\betavec)$ are defined in the appendix, \autoref{app : rewriting utility}. 

This rewriting of the action space and utility matches the combinatorial semi-bandits framework presented in \cite{zimmert2019beating}, allowing us to leverage their Best-of-Both-Worlds algorithm's results. The algorithm is detailed in \autoref{algo : bob_bidder} and uses the following regularizer, parameterized by $\gamma \in (0,1]$: \begin{equation} \label{def : regularizer} \Psi_{\gamma} (x) := \sum_{i=1}^N - \sqrt{x_i} + \gamma \left (  1- x_i \right ) \log \left (  1- x_i \right ) \end{equation}

The update in \autoref{algo : bob_bidder} is written in semi-bandit form in the transformed action space $\mathcal{H}$. The principal observes market-level bandit feedback only, i.e., the clearing price and obtained allocation $(p_t,x^t)$. The reconstruction claim is made explicit in \autoref{prop:bandit_to_semibandit_reconstruction}: once $\bvec^t$ is known, active-coordinate outcomes $y_{t,i}=h^t_iW_i(\betavec^t)$ are functions of $(\bvec^t,p_t,x^t)$ for coordinates with $h_i^t=1$. Hence the estimator is implementable from bandit feedback in the original auction representation; richer feedback (winning bids or full $\betavec^t$) is optional.

\begin{algorithm}
\caption{\textsc{BOB Algorithm for Semi-Bandit}}
\label{algo : bob_bidder}
\KwIn{$0 < \gamma \leq 1$, sampling scheme $P$}
\textbf{Initialize:} $\hat{W}_0 = (0, \ldots, 0)$, $\eta_t = 1/\sqrt{t}$\;
\For{$t = 1, 2, \ldots$}{
    Compute 
    \begin{equation} \label{eq : algorithm compute min}
        x_t = \arg\min_{\mathbf{h} \in \text{Conv}(\mathcal{H})} \left\langle \mathbf{ h }, \hat{W}_{t-1} \right\rangle + \eta_t^{-1} \Psi(h)
    \end{equation}
    where $\Psi(\cdot)$ is defined in \autoref{def : regularizer}\;
    
    Sample $\mathbf{h}_j^t \sim P(x_t)$\;
    
    Observe market-level bandit feedback $(p_t, x_j^t)$\;

    Reconstruct induced semi-bandit outcomes $y_{t,i}$ on active coordinates (via the
    $B \leftrightarrow \mathcal{H}$ mapping and \autoref{app : rewriting utility}), with
    \begin{equation*}
        y_{t,i} = h^t_{i} W_i(\betavec^t) \quad \text{when } h^t_{i}=1
    \end{equation*}
    
    Construct estimator $\hat{w}_t$, $\forall i \in [N]$: 
    \begin{equation*}
        \hat{w}_{t,i} = \frac{(y_{t,i}+K)\mathbb{I}_{t}(i)}{x_{t,i}} - K,\qquad \mathbb{I}_{t}(i)=h^t_{i}
    \end{equation*}
    
    Update $\hat{W}_t = \hat{W}_{t-1} + \hat{w}_t$\;
}
\end{algorithm}

We restate below a simplified version of Theorem 2 from \cite{zimmert2019beating}, whose guarantees extend to our setting: 

\begin{theorem}
The pseudo regret incurred by \autoref{algo : bob_bidder} is upper bounded, in the stochastic opposing bid case, by \begin{equation}
\tilde{R}_T = \mathcal{O} \left ( \log T \right )
\end{equation}
and in the adversarial opposing bid case by \begin{equation}
R_T = \mathcal{O} \left ( \sqrt{T} \right )
\end{equation}
where $\mathcal{O}$ hides problem-dependent constants.
\end{theorem}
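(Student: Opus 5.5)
The theorem is a transfer result: the guarantees of Theorem~2 in \cite{zimmert2019beating} carry over once we check that our bidding problem is an instance of their combinatorial semi-bandit model. The plan has three parts. We verify the structural hypotheses of that model in our reduced auction, invoking \autoref{lemma : equivalence market-auction} so that the uniform auction against $\betavec^t$ faithfully represents the FCR market. We check that the loss estimator in \autoref{algo : bob_bidder} is implementable and unbiased. We then import their bounds and translate their regret notions into ours.

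\textbf{Structural hypotheses.} By Lemmas~1 and~3 of \cite{potfer2024improved}, there is a bijection $\bvec \mapsto \mathbf{h}(\bvec)$ between $B$ and a set of binary vectors $\mathcal{H}\subseteq\{0,1\}^N$, and the utility is linear: $u(\bvec,\betavec)=\langle \mathbf{h}(\bvec),\mathbf{W}(\betavec)\rangle$. I will check that every $\mathbf{h}\in\mathcal{H}$ has at most $m=\mathcal{O}(K)$ active coordinates, since it essentially selects one edge per unit in a layered path structure. I will also check that the coordinates $W_i(\betavec)$ are bounded. Since prices and costs lie in $[0,1]$, each $W_i\in[-1,1]$, or $[-K,K]$ after accounting for aggregation. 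This is why the estimator shifts by $K$: the shifted losses $y_{t,i}+K$ are nonnegative, which is the sign condition their analysis needs. Where their setting uses losses rather than rewards, I will simply negate $\mathbf{W}$.

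\textbf{Feedback.} Next I will invoke \autoref{prop:bandit_to_semibandit_reconstruction}, which shows that for every active coordinate ($h^t_i=1$) the value $W_i(\betavec^t)$ is determined by $(\bvec^t,p_t,x^t)$. Market-level bandit feedback therefore yields exactly the semi-bandit observation. Since the sampling scheme $P$ satisfies $\mathbb{E}[h^t_i]=x_{t,i}$, the importance-weighted estimator $\hat w_{t,i}$ is conditionally unbiased for the shifted-then-unshifted loss, as in their construction.

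\textbf{Importing the bounds.} In the stochastic case, the $\betavec^t$ are i.i.d.\ from $\mathcal{D}$, so $\mathbf{W}(\betavec^t)$ is an i.i.d.\ loss vector. This matches their stochastic regime, where the regret is $\mathcal{O}\big(\sum_{i} \log T/\Delta_i\big)$, assuming a unique optimal $\mathbf{h}^\star$ and gaps $\Delta_i$ that are problem-dependent constants. In the adversarial case, an oblivious sequence $(\betavec^t)$ gives arbitrary bounded losses, and the bound is $\mathcal{O}(\sqrt{mNT})$. Both constants depend only on $K$ and $|\mathcal{B}|$, so they are hidden in $\mathcal{O}$. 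Finally, our $R_T$ is defined with a $\sup$ outside the expectation against a fixed sequence, so it coincides with their pseudo-regret for oblivious adversaries.

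\textbf{Main obstacle.} The hard part is the stochastic logarithmic bound. It requires the uniqueness-of-optimum condition of \cite{zimmert2019beating}. Because utilities are discretized through $\mathcal{B}$, ties among optimal bids are possible. My first approach would be to assume a unique optimal bid as a standing (generic) condition and fold it into the problem-dependent constants. The second point to verify is that the specific sampling scheme $P$ and the value of $\gamma$ satisfy the hypotheses of their Theorem~2. I would take both from their paper unchanged.
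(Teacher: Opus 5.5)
Your proposal follows the same route as the paper. The paper also justifies the theorem by using the linearization $u(\bvec,\betavec)=\mathbf{h}(\bvec)^\top\mathbf{W}(\betavec)$ from Lemmas~1 and~3 of \cite{potfer2024improved} to place the problem in the combinatorial semi-bandit framework of \cite{zimmert2019beating}, with \autoref{prop:bandit_to_semibandit_reconstruction} supplying the active-coordinate observations from market-level feedback, and then importing their Theorem~2. The uniqueness-of-optimum caveat you raise is genuine, and the paper silently drops it in its ``simplified version'' of that theorem; but it is an extra hypothesis on $\mathcal{D}$, not something that can be folded into the constants (and on a discretized grid, distinct bids inducing the same allocation and price almost surely under $\mathcal{D}$ can easily tie), so your argument, like the paper's, establishes the logarithmic bound only under that added assumption.
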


\paragraph{Computing the minimum efficiently}

The previous theorem ensures the good performance of \autoref{algo : bob_bidder} according to our benchmark. Since our goal is to produce a working algorithm with a real application in mind, we now discuss the implementability of the minimization step in \autoref{eq : algorithm compute min}.

In our setting, this step amounts to solving a constrained convex optimization problem on $\mathrm{Conv}(\mathcal{H})$. We use a conditional gradient (Frank--Wolfe) procedure, because the feasible set is a polytope with a simple combinatorial structure and the corresponding linear minimization oracle is tractable: each oracle call reduces to selecting an extreme point in $\mathcal{H}$, which can be formulated as a maximum-weight path problem in a directed acyclic graph.

Put simply, the optimization step is tractable in practice: with our current implementation, in the FCR setting we can run the method for all required values up to $K=500$ at the target precision in less than one minute per iteration.

                 \section{Numerical Study}\label{sec:numerics}

Finally, we apply the algorithm we designed to bid in FCR markets. We conduct two types of numerical study: one on synthetic data and one on real data from the FCR market. The complete code used to conduct this analysis is made available as supplementary material attached to this work, and FCR data is accessible at \cite{FCR-clearing-result}.
                                                                                                                   
To evaluate our algorithm fairly, we compare it to an EXP3-based algorithm developed in \cite{potfer2024improved}. This is a common no-regret baseline and is used in similar settings by \cite{abate2024learning}. 

\subsection{Synthetic data}

In order to fully assess the behaviour of the algorithm we developed, we conduct synthetic tests to compare \autoref{algo : bob_bidder} with the EXP3 baseline.  As mentioned above, the algorithm we developed benefits from asymptotic theoretical performance guarantees (and as such may hide large constants in $\mathcal{O}$ notation). These tests complement asymptotic results by examining empirical short-term behaviour and by serving as implementation sanity checks. 
In these synthetic experiments, we use $K=4$ units and a bid discretization step of $0.1$.  These two choices keep the action space relatively small and computation time low.

\begin{figure}[!htbp]
    \centering
    \begin{subfigure}{0.48\linewidth}
        \centering
        \includegraphics[width=\linewidth]{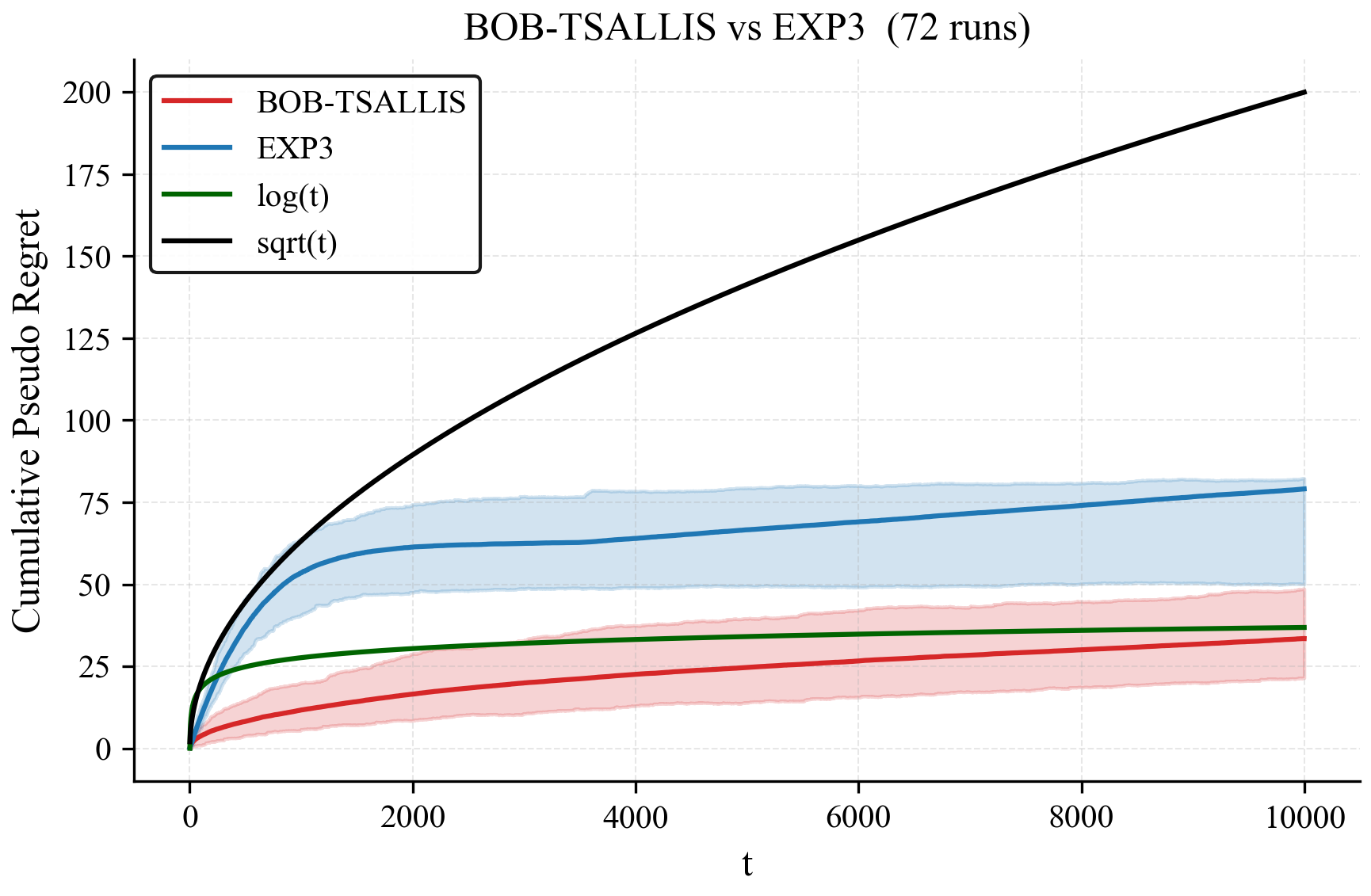}
        \caption{Bandit setting}
        \label{fig:synthetic_pseudo_regret}
    \end{subfigure}
    \hfill
    \begin{subfigure}{0.48\linewidth}
        \centering
        \includegraphics[width=\linewidth]{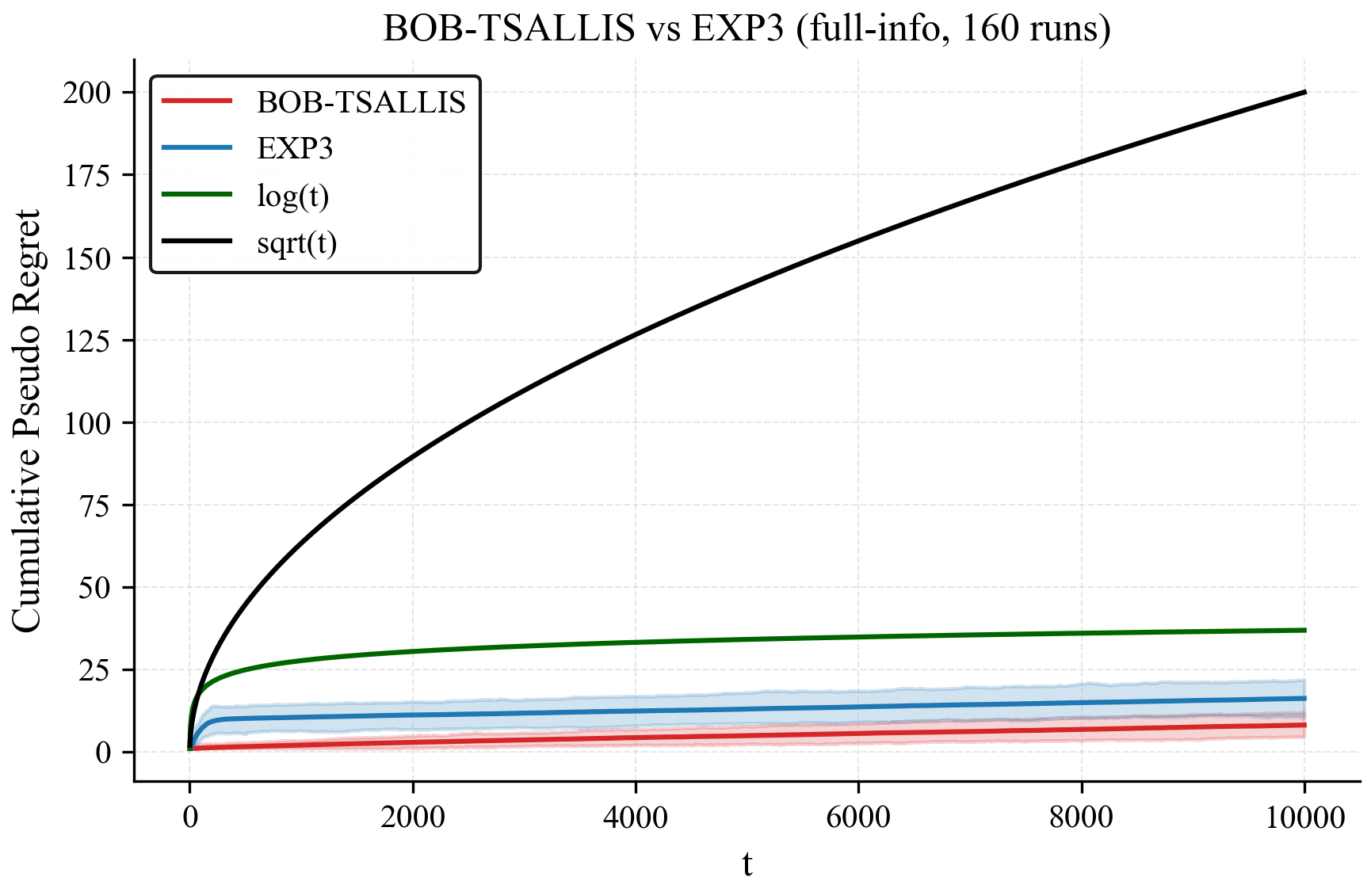}
        \caption{Full information setting}
        \label{fig:output_full_info}
    \end{subfigure}
    \caption{Pseudo-regret for BOB and EXP3 over 10000 rounds}
\end{figure}

\autoref{fig:synthetic_pseudo_regret} reports pseudo-regret trajectories in the synthetic setting for BOB and EXP3 over $10{,}000$ rounds, both receiving bandit feedback and facing uniformly distributed opposing bids. The figure also shows the reference curves $\sqrt{t}$ and $\log(t)$ to compare empirical growth with theoretical rates. The curves are averages over repeated runs, and the colored areas represent $95\%$ intervals (from the $2.5\%$ to the $97.5\%$ quantiles). In this experiment, both algorithms remain below their corresponding reference scales, which is consistent with the expected qualitative behaviour.

We also provide synthetic result of pseudo regret of both learning, under the richer full information feedback (when all opposing bids $\betavec^t$ are revealed) in \autoref{fig:output_full_info}, with the same experimental setting. As expected under this more informative setting, both algorithms drastically improve the incurred pseudo-regret, while our improved \autoref{algo : bob_bidder} still outperforms EXP3 algorithms. Overall, the synthetic study validates expected scaling and implementation behaviour, while the next subsection tests performance under realistic market conditions.

\subsection{FCR market Data}

We now focus on using historical data in order to quantify how \autoref{algo : bob_bidder} performs. 

 \paragraph{Experimental setup.}
We study repeated bidding in the European FCR capacity market from the point of view of one French flexibility provider. The analysis is based on public clearing data (offers, demand, and market outcomes) from July 1, 2020 to May 31, 2024. As in the market design itself, we treat the six 4-hour products (\texttt{NEGPOS\_00\_04}, \texttt{04\_08}, \texttt{08\_12}, \texttt{12\_16}, \texttt{16\_20}, \texttt{20\_24}) as distinct auction environments. This is important in practice, because observed price levels and variability differ substantially across products, and pooling them would mix structural intraday effects with the learning dynamics we want to measure.

Full implementation details are provided in the released code and reproducibility material, available as supplementary material, attached to this work. 

The learning protocol is online and therefore fully chronological. At each auction date, the learner submits a bid, observes market-level bandit feedback (clearing price and awarded quantity), reconstructs the coordinate-level outcomes required by \autoref{algo : bob_bidder} through the $B \leftrightarrow \mathcal{H}$ reformulation, and updates before the next auction. This matches the feedback model used in our theoretical guarantees. We compare BOB and EXP3 on exactly the same product-specific auction streams, with the same marginal costs of production, action discretization, and time horizon. For each product and each algorithm, we perform 102 Monte Carlo runs (with non-fixed seeds), and define regret against the best fixed bid in hindsight on the same evaluation window.

\begin{figure}[!htbp]
    \centering
    \begin{subfigure}
        {0.48\linewidth}
        \centering
        \includegraphics[width=\linewidth]{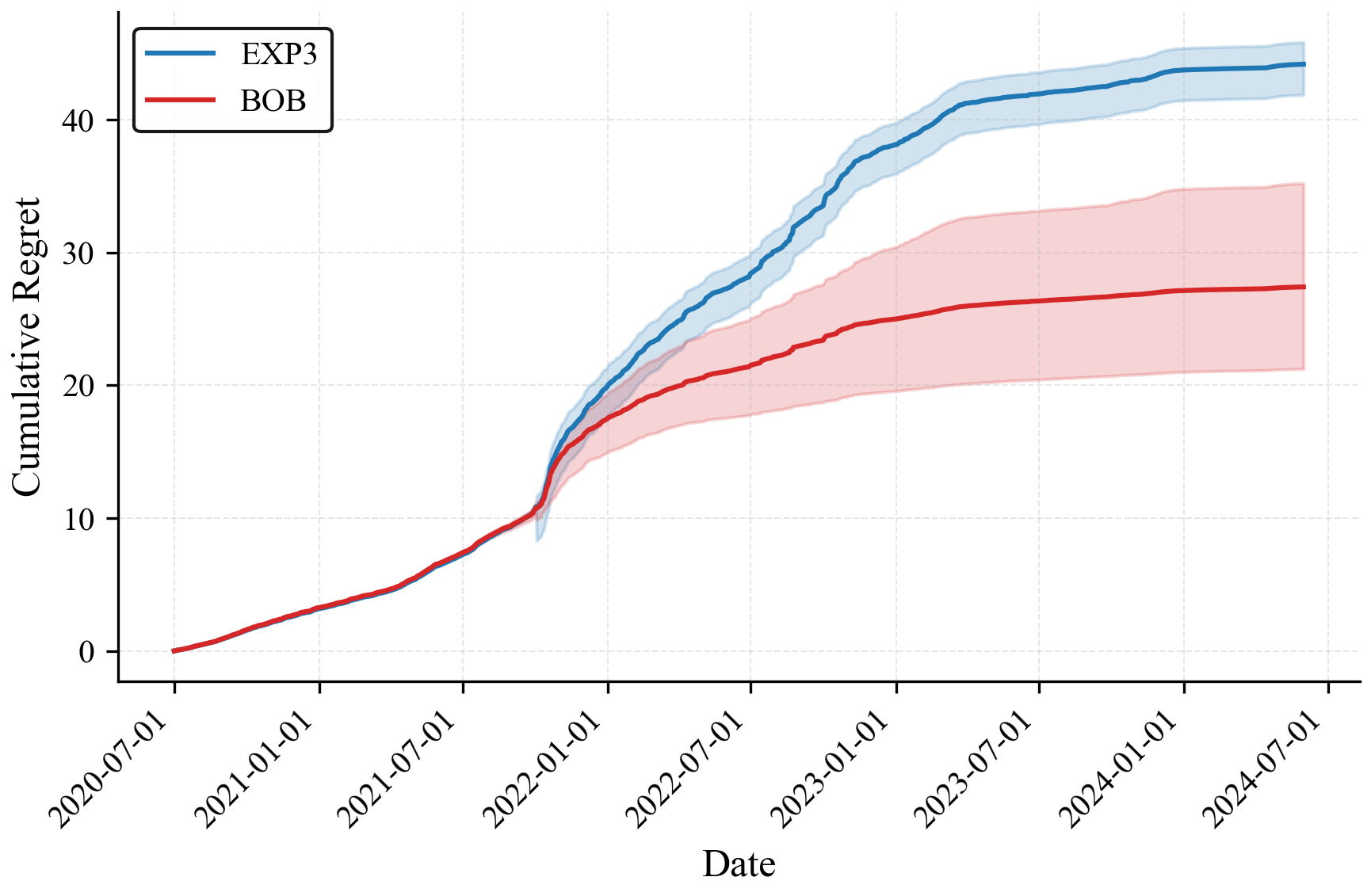}
        \caption*{(a) 00--04h}
    \end{subfigure}\hfill
    \begin{subfigure}{0.48\linewidth}
        \centering
        \includegraphics[width=\linewidth]{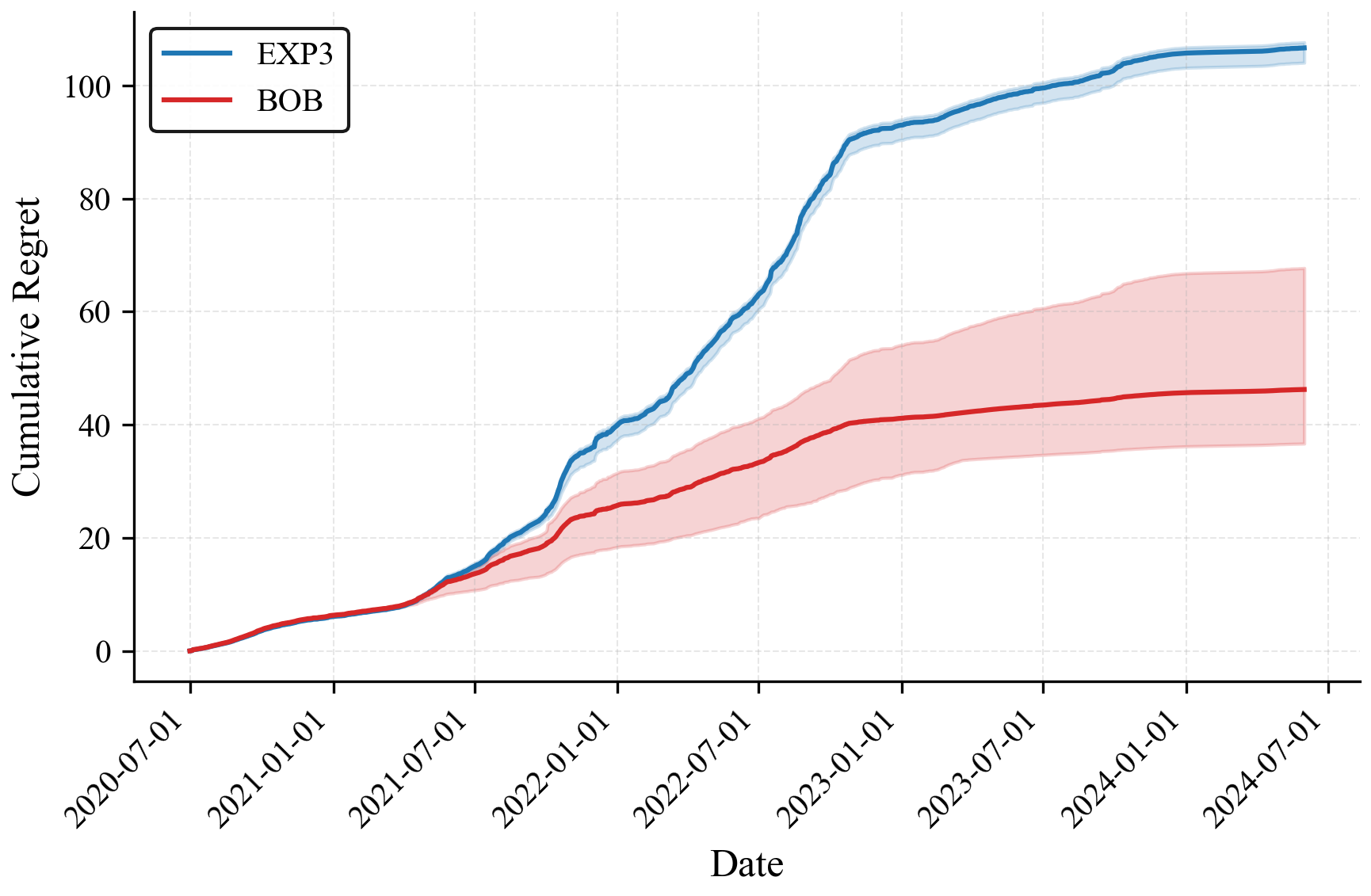}
        \caption*{(b) 16--20h}
    \end{subfigure}
    \caption{Cumulative regret on FCR for two products.}
    \label{fig:regret_00_04_12_16}
\end{figure}

\autoref{fig:regret_00_04_12_16} shows the regret incurred by the algorithms in our counterfactual experiments (backtesting). For this test, the learner is subjected to fully realistic conditions: it has no access to future data, and no data were used beforehand to train or calibrate it. Since the compared algorithms are stochastic, experiments were repeated to assess empirical averages and distributional behaviour of regret. The figure shows both the mean and the $2.5$--$97.5\%$ quantiles.

These experiments show that BOB incurs lower regret on the illustrated products, outperforming the EXP3 baseline in those settings.

\subsection{Non-stationarity}                                                                                    

\begin{figure}[!htbp]
    \begin{subfigure}{0.48\linewidth}
        \includegraphics[width=\linewidth]{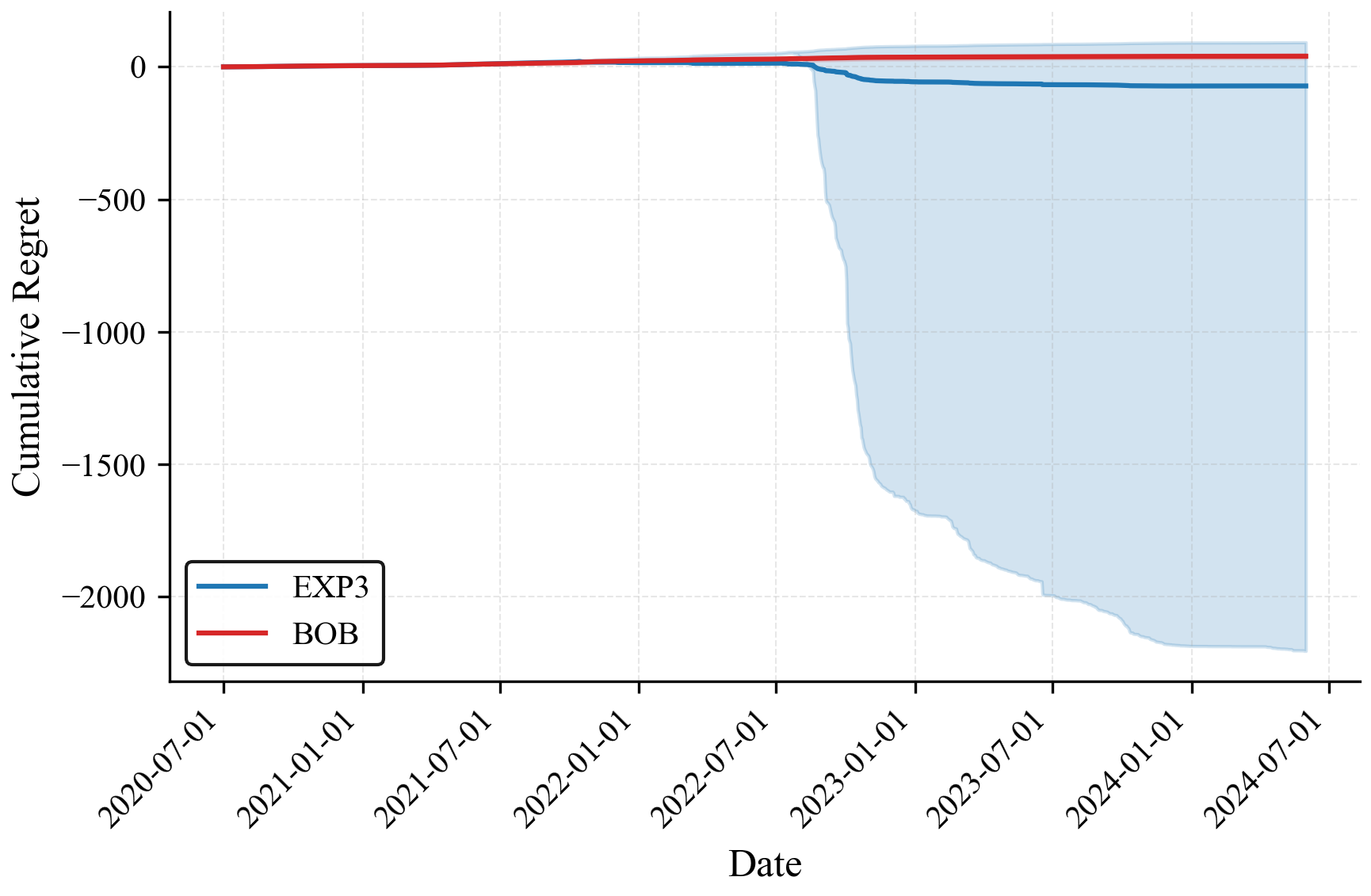}
        \caption{Product 8h-12h}
        \label{fig:regret_other}
    \end{subfigure} \hfill
    \begin{subfigure}{0.48\linewidth}
        \includegraphics[width=\linewidth]{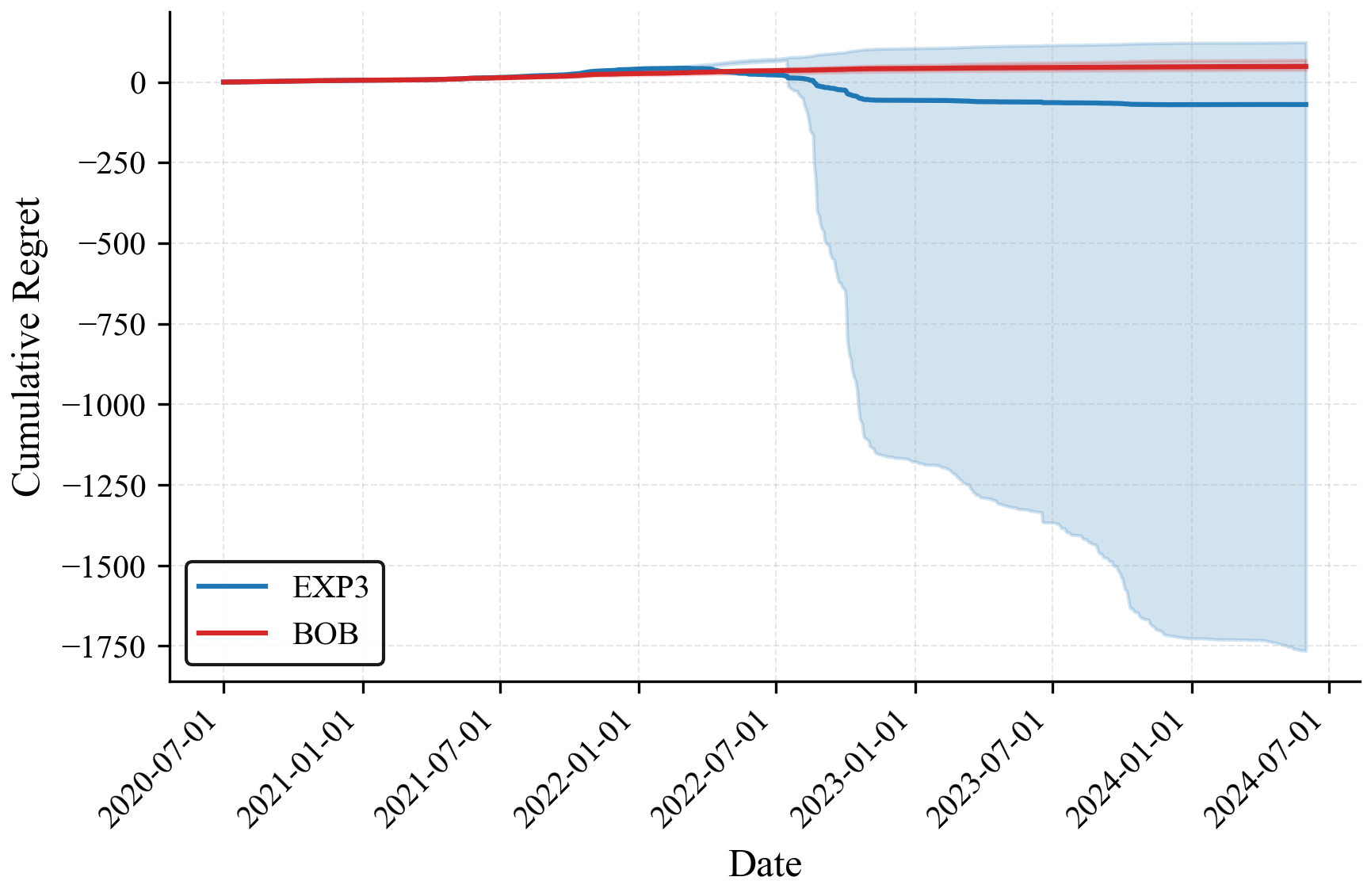}
        \caption{Product 12h-16h}
        \label{fig:regret_day_text}
    \end{subfigure}

    \caption{Day products regret comparison EXP3 and BOB (\autoref{algo : bob_bidder})}
\end{figure}

As shown in \autoref{fig:regret_day_text}, these positive results need to be tempered: we obtain drastically different results for the ``day'' products (08h--12h and 12h--16h). In these cases, while BOB behaves similarly, the regret incurred by the EXP3-based algorithms varies widely, with a significant probability of obtaining negative regret.

This behaviour is straightforward to interpret: it is consistent with stronger non-stationarity in opposing bids. For regret to become negative, submitted bids must, on average, outperform the best fixed bid in hindsight; this may happen when regime shifts change the best response over time. Since BOB is designed to exploit stochastic stability, it may fail to capture these shifts as effectively as more robust EXP3-type algorithms.
\FloatBarrier

\section{Conclusion}\label{conclusion}

This paper addresses the following question: how should a flexibility provider price its offers in the FCR market when competitors' bids are unknown? We first provide a detailed model of the market-clearing rules and then focus on the point of view of a single participant. Under mild assumptions, this reduces the bidding problem to a uniform-price auction, a simpler setting with a well-understood online learning structure.

Building on this reduction, we formulate bidding against unknown opposing bids as an online learning problem and adapt Best-of-Both-Worlds algorithms to the FCR setting. The resulting procedure uses only market-level bandit feedback (allocation and clearing price), relies on induced semi-bandit observations in the reformulated action space, and yields regret guarantees of $\mathcal{O}(\log T)$ in stochastic environments and $\mathcal{O}(\sqrt{T})$ in adversarial environments. The optimization step admits a tractable conditional-gradient implementation through an efficient linear minimization oracle on $\mathcal{H}$. In both synthetic experiments and backtests on real FCR data, these guarantees translate into competitive empirical performance.

From an operational perspective, the evidence supports a product-dependent policy. BOB is a strong default on products with more stable bid distributions, which in our data typically correspond to night/base-load-like periods. For products with stronger non-stationarity, notably day products, EXP3-type methods can be safer and may even outperform BOB by adapting better to regime shifts. In practice, this motivates an adaptive deployment strategy: select the learner by product and revise that choice over time as market stability changes.

Several extensions would broaden applicability. A natural next step is to handle participants active in multiple countries and account for interconnection constraints directly in the learning loop. Another is to include contextual covariates (e.g., plant availability, weather, or seasonality) to move beyond stationary product-level models. It would also be valuable to study strategic interactions when several participants learn simultaneously, and to test whether the same reduction-and-learning approach extends to other reserve or energy market mechanisms.

\section*{Acknowledgement}
Marius Potfer acknowledges the support of ANR through the PEPR IA FOUNDRY project (ANR-
23-PEIA-0003) and the Doom project (ANR-23-CE23-0002), as well as the ERC through the Ocean
project (ERC-2022-SYG-OCEAN-101071601).
Pierre Gruet and Cheng Wan acknowledge support from the FiME Lab.

\newpage
\bibliographystyle{cas-model2-names}

\bibliography{biblio}

\newpage
\appendix

\section{Appendix}\label{app}

\subsection{Some characteristic lemma of the FCR clearing}

The following lemma formalizes the (local) stability property of the FCR clearing that we use in the proof sketch of the market/auction equivalence: changing only the \emph{last accepted bid} of bidder $j$ (making it lower) or only the \emph{first rejected bid} (making it higher) cannot change bidder $j$'s allocation.
\begin{lemma}[Monotonicity of the clearing w.r.t. $j$'s bids]
\label{lem:monotonicity_clearing_bids}
Fix the bids of all participants other than $j$ (denoted $\bvec_{-j}$), and let $\bvec_j$ be a valid bid vector for bidder $j$. Let $\bigl((x_i)_{i\in\mathcal{J}}, (p_s)_{s\in\mathcal{S}}\bigr)$ be the clearing outcome for $(\bvec_j,\bvec_{-j})$, and write $x^\star := x_j$.

Assume that $\tilde{\bvec}_j$ is an alternative bid vector obtained from $\bvec_j$ such that
\[
\tilde{b}_{j,k} \le b_{j,k} \;\; \text{for all } k \le x^\star,
\qquad
\tilde{b}_{j,k} \ge b_{j,k} \;\; \text{for all } k > x^\star.
\]
Then the clearing outcome for $(\tilde{\bvec}_j,\bvec_{-j})$ satisfies $\tilde{x}_j = x^\star$ (i.e., bidder $j$'s allocation is unchanged).
\end{lemma}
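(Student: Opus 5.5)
The plan is to show that the original clearing outcome, with prices $(p_s)$, transfers $\mathbf{t}$ and allocation $(x_i)$, stays feasible under $\tilde{\bvec}_j$ and has the same cost. We then show that any solution optimal for $\tilde{\bvec}_j$ that gave $j$ a different quantity would contradict optimality under $\bvec_j$. The clearing problem \eqref{eq:fcr_minimization} is a joint minimization over prices and allocations. The bids of $j$ enter only through the bounds $\underline{x}_j(\cdot,p_s)\le x_j\le \bar{x}_j(\cdot,p_s)$ in \eqref{eq : encadrement allox} and through $\bar{x}_j$ in the demand condition \eqref{eq:clearing_demand_transfer}. So the whole argument reduces to comparing these counting functions for $\bvec_j$ and $\tilde{\bvec}_j$.

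\textbf{Step 1 (feasibility is preserved).} Fix the original price $p_s$. Bids are sorted, the allocation satisfies $\underline{x}_j\le x^\star\le\bar{x}_j$, and $j$ sells its cheapest units. Hence $b_{j,k}\le p_s$ for $k\le x^\star$, and $b_{j,k}\ge p_s$ for $k>x^\star$. After the modification we still have $\tilde b_{j,k}\le p_s$ for $k\le x^\star$ and $\tilde b_{j,k}\ge p_s$ for $k>x^\star$. Therefore $\underline{x}_j(\tilde{\bvec}_j,p_s)\le x^\star\le \bar{x}_j(\tilde{\bvec}_j,p_s)$. Condition \eqref{eq:clearing_demand_transfer} only needs $\bar{x}_j$ large enough to cover the allocation actually used, and the allocation balance \eqref{eq:clearing_allocation_balance} is unchanged. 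So the same $(\mathbf{p},\mathbf{t},(x_i))$ is admissible for $\tilde{\bvec}_j$, with identical cost. If condition \eqref{eq:clearing_demand_transfer} has slack because $\bar{x}_j$ drops, I will check that replacing it by the allocation-based requirement is harmless.

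\textbf{Step 2 (optimality transfers back).} Suppose the clearing for $(\tilde{\bvec}_j,\bvec_{-j})$ produces an optimum $(\tilde{\mathbf p},\tilde{\mathbf t},(\tilde x_i))$ of strictly lower cost. I will build from it a solution feasible for $\bvec_j$. If $\tilde x_j\le x^\star$ and $\tilde{p}_s\ge b_{j,\tilde x_j}$, the same allocation is feasible for $\bvec_j$, because its accepted bids are $\le \tilde b$-bounds shifted upward only on indices $>x^\star$, which are not used. The delicate cases are $\tilde{x}_j> x^\star$, meaning some rejected bid, now raised, is accepted, and $\tilde x_j<x^\star$ with $\tilde p_s$ below an original accepted bid. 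For these I will use an exchange argument. In the first case $\tilde p_s\ge\tilde b_{j,\tilde x_j}\ge b_{j,\tilde x_j}$, so the $\bar x$ bound is only looser under $\bvec_j$. In the second case I will compare to the original solution, where lowering accepted bids can only make $j$ weakly more attractive. The goal is to conclude that the two problems share the same optimal value, and that the original solution remains optimal for $\tilde{\bvec}_j$.

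\textbf{Step 3 (uniqueness / tie-breaking of $j$'s quantity).} Equal optimal value does not by itself force $\tilde x_j=x^\star$ when there are multiple optima. I will invoke the clearing's deterministic selection rule, with ties resolved as in the original mechanism, which the $\arg\min$ implicitly assumes is single-valued. Since the original optimum remains optimal and the set of optima can only shrink or stay the same on the $j$-relevant coordinates, the selected outcome keeps $j$'s quantity. I expect Step 2 to be the main obstacle, because prices are chosen jointly with allocations and a price change can shift the $\underline{x}/\bar{x}$ windows discontinuously. My first attempt there is a direct contradiction argument: take the better $\tilde{\bvec}_j$-solution, clip $j$'s quantity to $x^\star$, and reroute the difference through other providers at the same prices, to obtain a $\bvec_j$-feasible solution with cost no larger.
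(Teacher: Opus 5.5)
The gap is in Step 2. Your Step 1 is correct and matches the paper's ``key admissibility observation'': at the original prices the accepted/rejected split of $j$'s units is unchanged, so the original outcome stays feasible for $(\tilde{\bvec}_j,\bvec_{-j})$ at the same cost.

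\textbf{The false step.} Step 2 aims to show that both clearing problems have the same optimal value and that the original outcome remains optimal for $\tilde{\bvec}_j$. This is false, and Step 3 rests on it. Lowering an accepted bid of $j$ that sets the price strictly lowers the clearing price. Take one isolated country ($T_s=0$) with $D_s=2$, $\bvec_j=(0.3,0.9)$, and one competitor bidding $(0.2,0.8)$.
\begin{itemize}
\item Under $\bvec_j$, the cheapest admissible price is $p_s=0.3$, giving $x^\star=1$ and cost $0.6$.
\item Under $\tilde{\bvec}_j=(0.1,0.9)$, which lowers only the accepted unit, $p_s=0.2$ becomes admissible. The cost drops to $0.4$ and $\tilde x_j=1$.
\end{itemize}
The lemma's conclusion holds, yet the original outcome is no longer optimal. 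The new optimum also cannot be transported back to $\bvec_j$, since at price $0.2$ only one unit is offered under $\bvec_j$.

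This configuration, $\tilde x_j=x^\star$ with $\tilde p_s<b_{j,x^\star}$, does not appear in your case split at all. Your case $\tilde x_j>x^\star$ is also incomplete: you check only the $\bar{x}_j$ bound. Under $\bvec_j$ you additionally need $\underline{x}_j(\bvec_j,\tilde p_s)\le\tilde x_j$, i.e.\ $b_{j,\tilde x_j+1}\ge\tilde p_s$. This can fail, because the original rejected bids are weakly lower than the modified ones.

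\textbf{The missing idea.} You need to track $j$'s \emph{quantity} separately from the prices, and allow the prices to move. The paper introduces $C(y,\cdot)$, the minimal cost when $j$ is forced to supply exactly $y$ units. It uses feasibility preservation only for the one-sided bound $C(x^\star,\tilde{\bvec}_j)\le C(x^\star,\bvec_j)$. It then argues, for every $y\neq x^\star$, that
\[
C(y,\tilde{\bvec}_j)\ge C(y,\bvec_j)\ge C(x^\star,\bvec_j)\ge C(x^\star,\tilde{\bvec}_j).
\]
So the optimal value may fall while $j$'s optimal quantity does not change.

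\textbf{Tie-breaking.} Your Step 3 needs an explicit rule, not the claim that the set of optima can only shrink. The objective $\sum_s p_s\sum_{i\in\mathcal{J}_s}x_i$ depends only on prices and net positions, not on how a country's quantity is split among its providers. Ties in $x_j$ are therefore generic. Under the paper's rule (ties in favor of $j$), the original selection already forces $C(y,\bvec_j)>C(x^\star,\bvec_j)$ for all $y>x^\star$. That makes the chain strict for $y>x^\star$, so the rule again selects $x^\star$.
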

\begin{proof}
We fix a deterministic tie-breaking rule in favor of bidder $j$ for the proof of this lemma\footnote{The same argument is unchanged for any deterministic tie-breaking rule fixed ex ante and independent of the local bid perturbation applied to bidder $j$.}.

The FCR clearing mechanism determines the allocation by minimizing the total procurement cost across the network. Let $C(y, \bvec_j)$ denote the minimal total procurement cost defined by the objective function \eqref{eq:fcr_minimization} under the constraint that participant $j$ is exogenously forced to provide exactly $y$ units. Because $x^\star$ is the optimal allocation under the original bids $\bvec_j$, it necessarily holds that
\[
C(y, \bvec_j) \ge C(x^\star, \bvec_j)
\]
for any alternative allocation $y$.

The key admissibility observation is the following. Let $\bigl((x_i^\star)_{i\in\mathcal{J}}, (p_s^\star)_{s\in\mathcal{S}}, \mathbf t^\star\bigr)$ be one clearing solution for $(\bvec_j,\bvec_{-j})$ with $x_j^\star=x^\star$. Under the considered perturbation, we only lower bids of bidder $j$ on accepted indices ($k\le x^\star$) and only increase bids on rejected indices ($k>x^\star$). Hence, at the original price vector $\mathbf p^\star$, bidder $j$'s accepted/rejected partition is preserved, so the quantities entering admissibility constraints remain valid for the same triple $\bigl((x_i^\star), (p_s^\star), \mathbf t^\star\bigr)$. In particular, the original clearing solution remains admissible for $(\tilde{\bvec}_j,\bvec_{-j})$.

As a consequence, the comparison step
\[
C(x^\star, \tilde{\bvec}_j) \le C(x^\star, \bvec_j)
\]
is justified by feasibility preservation (same candidate feasible point, same objective form), and the remaining argument compares alternative forced allocations $y$ against this preserved reference.

We evaluate the procurement cost under the modified bids $\tilde{\bvec}_j$.

For any candidate allocation $y < x^\star$, participant $j$ provides fewer units than in the optimal scenario. The replacement units required to satisfy the network balance \eqref{eq:clearing_allocation_balance} must be procured from competing participants, and these marginal replacement units set the local price $p_s$ according to \eqref{eq:clearing_prices_international}. Consequently, the bids of $j$ for $k \le y$ are strictly inframarginal. Therefore, lowering these bids does not alter total procurement cost:
\[
C(y, \tilde{\bvec}_j) = C(y, \bvec_j).
\]

For the optimal allocation $x^\star$, decreasing the bids of participant $j$ for units $k \le x^\star$ can only decrease or maintain the clearing price $p_s$. Therefore,
\[
C(x^\star, \tilde{\bvec}_j) \le C(x^\star, \bvec_j).
\]
Combining these observations yields, for any $y < x^\star$,
\[
C(y, \tilde{\bvec}_j) = C(y, \bvec_j) \ge C(x^\star, \bvec_j) \ge C(x^\star, \tilde{\bvec}_j).
\]

Symmetrically, for any candidate allocation $y > x^\star$, the system must purchase additional units from $j$ (indexed by $x^\star < k \le y$). Because $\tilde{b}_{j,k} \ge b_{j,k}$ for these units, the cost to procure $y$ units under $\tilde{\bvec}_j$ is at least as high as under $\bvec_j$. Thus,
\[
C(y, \tilde{\bvec}_j) \ge C(y, \bvec_j) \ge C(x^\star, \bvec_j) \ge C(x^\star, \tilde{\bvec}_j).
\]
Hence $C(x^\star, \tilde{\bvec}_j) \le C(y, \tilde{\bvec}_j)$ for all possible allocations $y$, proving that $x^\star$ remains optimal under $\tilde{\bvec}_j$. Therefore, $\tilde{x}_j = x^\star$.
\end{proof}

\subsection{Equivalence with uniform price auction} \label{app : lemma equivalence}

To fully prove the equivalence result, we first restate how the uniform price auction mechanism is defined: 

\paragraph{Uniform Price auction}  \label{app:paragraph:reserve-market} We describe below in detail the uniform price auction mentioned above, where $D_{max,s}$ units are to be bought. We describe this auction from the point of view of the flexibility provider, whose cost of producing the $\text{k}^\text{th}$-unit of electricity is denoted by $c_{k}\in [0,1]$, against opposing bids $\betavec$. The auction proceeds as follows:

\begin{enumerate}
\item The principal submits its bids $\bvec_j :=(b_{j,k})_{k\in[K]} \in B$, where $K=D_{max,s}$ and $B=\{(b_l)_{l \in [K]} \in \mathcal{B}^{D_{max,s}}, \text{ such that } 0 \leq b_{1}\leq b_{2} \leq \hdots \leq b_{K} \leq1 \}$.
\item The opposing bids $\betavec = (\beta_k)_{k \in [K]} \in B$ (in non-decreasing order)\footnote{We take $\betavec \in B$, this is without loss of generality : if $\boldsymbol{\beta}$ has more than $D_{max,s}$ elements then, when only keeping the $D_{max,s}$ the auction results remain the same regardless of $\bvec_j$  } are submitted to the auction.
\item The per-unit price is set as the $D_{max,s}^\text{th}$ lowest bid from $(\bvec_j,\betavec)$, denoted by $p^{\mathrm U} \left(\bvec_j, \betavec \right)$.
\item The principal sells to the auctioneer every unit of flexibility they proposed below the price $p^{\mathrm U} \left( \bvec_j,\betavec \right)$ and receives this price in exchange. Their allocation $x^{\mathrm U}\in[K]$ of production is therefore as follows\footnote{This allocation breaks ties in favor of the flexibility provider for simplicity; our results still follow through if tie-breaking is random or always against the principal.}:
\begin{align*}
x^{\mathrm U}\left (\bvec_j,\betavec \right ) & := 
\card{ \left \{k \in [K] \text{ s.t. } b_{j,k} \leq p^{\mathrm U} \left ( \bvec_j,\betavec \right ) \right \} } .\numberthis \label{app:def:allocation-elec-market}
\end{align*}
\end{enumerate}

Before providing the complete proof, to ensure clarity, we restate the theorem.
\mymaintheorem*

\begin{proof}[Formal proof of Theorem~\ref{lemma : equivalence market-auction}]
Recall that we focus on a flexibility provider $j\in\mathcal J_s$, located in country $s\in\mathcal S$, that we called the principal, we define \[
K:=D_{\max,s}.
\]
All bids of players different from $j$ are fixed and denoted by $\bvec_{-j}$.
For any admissible bid vector $\bvec_j=(b_{j,1},\dots,b_{j,K})\in B$, let
\[
x_j(\bvec_j):=x_j(\bvec_j,\bvec_{-j}),\qquad p_s(\bvec_j):=p_s(\bvec_j,\bvec_{-j})
\]
be bidder $j$'s FCR allocation and local price.

Our proof leverages the regularity of allocation shown by \autoref{lem:monotonicity_clearing_bids}, this allows us to fully characterize the allocation function by only characterizing allocation for simple "test profiles", detailed below. 

For each $m\in[K]$ and each $\alpha\in[0,1]$, define the test profile
\[
\psi^{m}(\alpha):=(\underbrace{0,\dots,0}_{m-1\text{ entries}},\alpha,\underbrace{1,\dots,1}_{K-m\text{ entries}})\in B,
\]
and
\[
g_m(\alpha):=\indicator\!\left\{x_j\bigl(\psi^{m}(\alpha)\bigr)\ge m\right\}.
\]
By Lemma~\ref{lem:monotonicity_clearing_bids}, each $g_m$ is a threshold function. Define
\[
\beta_{K-m+1}:=\sup\{\alpha\in[0,1]: g_m(\alpha)=1\}\in[0,1],\qquad m\in[K],
\]
and set $\betavec:=(\beta_1,\dots,\beta_K)$.

\textbf{Step 1 (allocation equivalence).}
Fix any $\bvec_j\in B$ and any $m\in[K]$.
By successive applications of Lemma~\ref{lem:monotonicity_clearing_bids},
\[
\indicator\{x_j(\bvec_j)\ge m\}=\indicator\{b_{j,m}\le\beta_{K-m+1}\}.
\]
Summing over $m$ gives
\begin{equation}
\label{eq:allocation_threshold_representation_appendix}
x_j(\bvec_j)=\sum_{m=1}^K \indicator\{b_{j,m}\le\beta_{K-m+1}\}.
\end{equation}
Equation \eqref{eq:allocation_threshold_representation_appendix} is exactly the uniform-price acceptance rule, hence
\[
x_j(\bvec_j)=x^{\mathrm U}(\bvec_j,\betavec).
\]

\textbf{Step 2 : price equivalence}
Let $k:=x_j(\bvec_j)$. By allocation admissibility in \eqref{eq : encadrement allox},
\begin{equation}
\label{eq:price_bracket_appendix}
b_{j,k}\le p_s(\bvec_j) < b_{j,k+1},
\end{equation}
with conventions $b_{j,0}:=0$ and $b_{j,K+1}:=1$.

From Step~1, since exactly the first $k$ units are sold,
\[
b_{j,k}\le\beta_{K-k+1},\qquad b_{j,k+1}>\beta_{K-k}.
\]
Define $q_k:=\max\{b_{j,k},\beta_{K-k}\}$, we show below that $p_s(\bvec_j)=q_k$.

Let us first suppose that $p_s(\bvec_j)>q_k$ and show it implies a contradiction. 
If $p_s(\bvec_j)>q_k$, consider the modified bids that only change $b_{j,k+1}$ and fix it to a value in $(q_k,p_s(\bvec_j))$.
By \eqref{eq:allocation_threshold_representation_appendix}, allocation remains $k$. Yet, by the constraint \eqref{eq:clearing_allocation_balance}, as well as the definition of $\bar{x}_j$ and $\underline{x}_j$, the allocation must be $k+1$, which is a contradiction. Hence
\[
p_s(\bvec_j)\le q_k.
\]

Let us now suppose that $p_s(\bvec_j)<q_k$ and show it implies a contradiction.
If $p_s(\bvec_j)<q_k$, either $p_s(\bvec_j)<b_{j,k}$ (contradicting \eqref{eq:price_bracket_appendix}) or
$p_s(\bvec_j)<\beta_{K-k}$. Suppose $p_s(\bvec_j)<\beta_{K-k}$, and modify only $b_{j,k+1}$ to a value in $(p_s(\bvec_j),\beta_{K-k}]$. By \eqref{eq:allocation_threshold_representation_appendix}, this local change moves the $(k+1)$-st unit to the accepted side of the threshold description, so the induced allocation is at least $k+1$. But by \eqref{eq:price_bracket_appendix}, the boundary at price $p_s(\bvec_j)$ corresponds to exactly $k$ accepted units for the principal. This is a contradiction. Therefore
\[
p_s(\bvec_j)\ge q_k.
\]
Thus,
\begin{equation}
\label{eq:price_max_identity_appendix}
p_s(\bvec_j)=\max\{b_{j,k},\beta_{K-k}\}.
\end{equation}

Since $k=x^{\mathrm U}(\bvec_j,\betavec)$, the $K$-th order statistic of $\bvec_j\cup\betavec$ is
\[
p^{\mathrm U}(\bvec_j,\betavec)=\max\{b_{j,k},\beta_{K-k}\}.
\]
Using \eqref{eq:price_max_identity_appendix}, we obtain
\[
p_s(\bvec_j)=p^{\mathrm U}(\bvec_j,\betavec).
\]

Therefore, for every $\bvec_j$, both allocation and price coincide between the FCR mechanism and the corresponding uniform auction, proving the theorem.
\end{proof}

\subsection{Rewriting the utility} \label{app : rewriting utility}

The point of this section is to restate/make the link to the correct notation with Lemma 1 and Lemma 3 from \cite{potfer2024improved}. This shows that the utility can be rewritten as a dot product between two vectors: $\mathbf{h}$ and $\mathbf{W}$, each depending either on $\bvec$ or $\boldsymbol{\beta}$. 

We begin by recalling how the utility is traditionally written in a uniform-price auction. In this section, since we focus on uniform auctions, we use $x(.)$ instead of $x^{\mathrm U} (.)$ and $p(.)$ instead of $p^{\mathrm U} (.)$.

\[ u(\bvec,\boldsymbol{\beta}) = \sum_{l=1}^{x(\bvec,\boldsymbol{\beta})} \left [ p(\bvec,\boldsymbol{\beta}) -c_l \right ]\]

We then move to writing this in a similar fashion as in \cite{potfer2024improved}, by using indicator functions, denoted $\indicator$. Then, we can just recall their Lemma 1 for the existence and uniqueness of $\mathbf{h} (\bvec)$ and their Lemma 3 to write the following formula for $\mathbf{W}(\betavec)$.

Let $N:=\frac{2K}{\varepsilon}$ and $\mathbf{W}(\betavec):=(w_k(\betavec))_{k=1}^{N}$. For each coordinate $k\in[N]$, define
\[
m_k:=\left \lfloor \frac{k}{2\varepsilon} \right \rfloor,\qquad
q_k:=\left(\frac{k}{2\varepsilon}-m_k\right)\varepsilon.
\]
Then
\[
w_k(\betavec):=
\begin{cases}
\sum_{l=1}^{m_k}\left(\beta_{K-m_k}-c_l\right)
& \text{if } \floor{\frac{k}{\varepsilon}} \text{ is odd and } q_k=\beta_{K-m_k},\\
\sum_{l=1}^{m_k}\left(q_k-c_l\right)
& \text{if } \floor{\frac{k}{\varepsilon}} \text{ is even and } \beta_{K-m_k}<q_k<\beta_{K-m_k+1},\\
0 & \text{otherwise.}
\end{cases}
\]
For boundary indices, we use the convention $\beta_0:=0$ and $\beta_{K+1}:=1$.

As for $\mathbf{h} (\bvec)$, instead of indexing it directly as a 1D vector, the mapping is based on a flattened 2D matrix, defined by binary variables that track the structural properties of the bid sequence across items and price levels. Let $\mathcal{K} = \{1, \frac{3}{2}, 2, \dots, K - \frac{1}{2}, K\}$ be the set of item indices including half-steps.

We now define these indicators without ambiguity. Let the price grid be
\[
\mathcal{J}_\varepsilon := \{0,\varepsilon,2\varepsilon,\dots,1\}.
\]
For any unit index $k\in[K]$ and any grid value $q\in\mathcal{J}_\varepsilon$, define
\[
h_{k,q}(\bvec):=\indicator\{b_{k}=q\}.
\]
For any $k\in[K-1]$ and any grid value $q\in\mathcal{J}_\varepsilon\setminus\{1\}$, define the half-step indicators
\[
h_{k+\frac12,q}(\bvec):=\indicator\{b_{k}<q< q +\varepsilon \le b_{k+1}\}.
\]
This interval condition records the price grid transition crossed between consecutive bids.

Finally, to compute the linear utility
\[
u(\bvec,\betavec)=\mathbf{h}(\bvec)^\top\mathbf{W}(\betavec),
\]
the pseudo-bid vector $\mathbf{h}(\bvec)\in\{0,1\}^{N+ K -\frac{1}{\epsilon}}$ is obtained by flattening the family
\[
\{h_{k,q}(\bvec):k\in[K],q\in\mathcal{J}_\varepsilon\}\cup\{h_{k+\frac12,q}(\bvec):k\in[K-1],q\in\mathcal{J}_\varepsilon\setminus\{1\}\}
\]
in lexicographic order (first by $k\in\mathcal{K}$, then by decreasing $q$).

\begin{proposition}[Bandit-to-semi-bandit reconstruction on active coordinates]
\label{prop:bandit_to_semibandit_reconstruction}
Fix a round $t$. Suppose the learner observes bandit feedback $(p_t,x^t)$ and knows its own submitted bid vector $\bvec^t$ and cost vector $(c_l)_{l\in[K]}$. Then for every played (active) coordinate $i$ with $h_i^t=1$, the quantity
\[
y_i^t:=h_i^tW_i(\betavec^t)
\]
is a deterministic function of $(\bvec^t,p_t,x^t)$ and therefore can be reconstructed from market-level bandit feedback.
\end{proposition}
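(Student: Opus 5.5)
The plan is to check coordinate by coordinate, by type of active coordinate. Fix the round $t$ and write $\bvec=\bvec^t$, $\betavec=\betavec^t$, $k:=x^t$. The active coordinates of $\mathbf h(\bvec)$ come in two families: the ``integer'' indicators $h_{m,q}(\bvec)=\indicator\{b_m=q\}$, which equal $1$ exactly when $q=b_m$, and the ``half-step'' indicators $h_{m+\frac12,q}(\bvec)$, which equal $1$ when $b_m<q<q+\varepsilon\le b_{m+1}$. For each active coordinate $i$, we must show that $W_i(\betavec)$ can be evaluated from $(\bvec,p_t,x^t)$. By the explicit formula for $w_i(\betavec)$, $W_i$ is either $0$ or a sum $\sum_{l=1}^{m}(\text{price}-c_l)$, and the costs are known. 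So it suffices to recover two things: (a) which branch of the case distinction applies, which depends only on comparisons between $q$ and the two opposing bids $\beta_{K-m}$, $\beta_{K-m+1}$; and (b) the value in that branch, which is $q$ or $\beta_{K-m}$.

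The key step uses the identity $u(\bvec,\betavec)=\mathbf h(\bvec)^\top\mathbf W(\betavec)$ together with the structure from Lemma~1 and Lemma~3 of \cite{potfer2024improved}. For a fixed $\bvec$, at most one active coordinate carries nonzero weight: the one corresponding to the realized pair $(x,p)$, namely unit count $m=k$ with price level $p_t$. I would make this precise as follows. For an active coordinate indexed by $(m,q)$, its weight is nonzero iff the price/allocation event it encodes occurs. That event is ``exactly $m$ units sold at price $q$'', where either $q$ is the principal's own bid $b_m$ or $q=\beta_{K-m}$ lies strictly between $b_m$ and $b_{m+1}$. The proof sketch of Theorem~\ref{lemma : equivalence market-auction} gives $p=\max\{b_{j,k},\beta_{K-k}\}$. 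So the event is exactly $\{x^t=m,\ p_t=q\}$, possibly refined by whether $p_t=b_k$ or $p_t>b_k$, and this refinement is itself read off from $(\bvec,p_t)$. Consequently $y_i^t=\indicator\{(m,q)\text{ matches }(x^t,p_t)\}\sum_{l=1}^{x^t}(p_t-c_l)$. This is a deterministic function of $(\bvec^t,p_t,x^t)$, and the sum equals the realized utility.

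The main obstacle is reconciling the indexing in the stated formula for $w_k$ (the parity of $\lfloor k/\varepsilon\rfloor$, and the offsets $m_k,q_k$) with the $(m,q)$ labels of $\mathbf h$. Concretely, I need to verify that the ``odd'' branch, with value $\beta_{K-m}$ at $q=\beta_{K-m}$, is attached to a half-step coordinate, and that the ``even'' branch, with $\beta_{K-m}<q<\beta_{K-m+1}$, is attached to a bid coordinate $q=b_m$. Some care is also needed with the discretization: $\beta$ takes values in the same grid, so when a half-step coordinate is active with price $\beta_{K-m}=q$, the strict inequalities hold, and ties at grid points are resolved by the tie-breaking in favour of the principal. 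To handle this, I would first fix that correspondence by matching the explicit case formula against $u=\mathbf h^\top\mathbf W$ on the two price cases $p=b_k$ and $p=\beta_{K-k}>b_k$. Then I would argue that on every other active coordinate, the branch condition fails, and I would detect that failure from $(x^t,p_t)$ using the allocation characterization $\indicator\{x\ge m\}=\indicator\{b_m\le\beta_{K-m+1}\}$.
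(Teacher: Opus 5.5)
Your proposal follows the same route as the paper's proof: both rest on the Lemma~3 decomposition of \cite{potfer2024improved}, under which each active coordinate contributes $\indicator\{x^t=m_i,\ p_t=q_i\}\sum_{l\le m_i}(q_i-c_l)$ on pairwise disjoint outcome events, so $(\bvec^t,x^t,p_t)$ determines every $y_i^t$; your use of $p=\max\{b_k,\beta_{K-k}\}$ to match bid coordinates and half-step coordinates to the two price cases is exactly the identification the paper delegates to that citation. The tie care you flag is not supplied by the paper either: with the strict inequalities in the displayed formula for $w_k$, a bid coordinate with $b_m=\beta_{K-m}$ or $b_m=\beta_{K-m+1}$ can realize $(x^t,p_t)=(m,b_m)$ while receiving weight $0$, so the event identification must come from the cited lemma under a compatible tie convention rather than from the displayed formula.
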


\begin{proof}

Fix $t$ and denote by $u_t$ the realized utility:
\begin{equation}\label{app : utility equation from alloc and price}
u_t=\sum_{l=1}^{x_j^t}(p_t-c_l).
\end{equation}
We use Lemma 3 in \cite{potfer2024improved}: utility is decomposed into sub-utilities indexed by coordinates, and each sub-utility is an indicator of an auction outcome event (allocation level + price cell) times the corresponding payoff term. In particular, the events are disjoint, so at most one coordinate contributes a strictly positive sub-utility.

In our notation this gives

\[u_t=\sum_{i=1}^{N} h_i^tW_i(\betavec^t)
\]
And one notices from Lemma 3 in \cite{potfer2024improved} that each  $h_i^tW_i(\betavec^t)$ is of the form  \[
\indicator \{\{x^t = m_i \}\cap\{ p_t=q_i\}\} w_i,
\] The key point is that, once $\bvec^t$ is fixed, each $h_i(\bvec^t)W_i(\betavec^t)$ can be computed (because only one can be non-zero and total utility can be computed using \eqref{app : utility equation from alloc and price}).
Therefore, knowing both the allocation and realized price is enough to compute all $y_i^t$.

\end{proof}

\printcredits


\end{document}